\title{Screening Rules and its Complexity for Active Set Identification}
\author{Eugene Ndiaye\footnote{Riken AIP, Japan. Correspondence to: eugene.ndiaye@riken.jp} \qquad Olivier Fercoq\footnote{LTCI, T\'el\'ecom Paris, Institut Polytechnique de Paris, Palaiseau, France.} \qquad Joseph Salmon\footnote{IMAG,Univ Montpellier, CNRS, Montpellier, France}}
\begin{document}
\sloppy
\date{}
\maketitle

\begin{abstract}
Screening rules were recently introduced as a technique for explicitly identifying active structures such as sparsity, in optimization problem arising in machine learning. This has led to new methods of acceleration based on a substantial dimension reduction.
We show that screening rules stem from a combination of natural properties of subdifferential sets and optimality conditions, and can hence be understood in a unified way.
Under mild assumptions, we analyze the number of iterations needed to identify the optimal active set for any converging algorithm. We show that it only depends on its convergence rate.
\end{abstract}


\section{Introduction}

In learning problems involving a large number of variables, sparse models such as Lasso and Support Vector Machines (SVM) allow to select the most important variables.
For instance, the Lasso estimator depends only on a subset of features that have a maximal absolute correlation with the residual; whereas the SVM classifier depends only on a subset of sample (the support vectors) that characterize the margin.
The remaining features/variables have no contribution to the optimal solution.
Thus, early detection of those non influential variables may lead to significant simplifications of the problem, memory and computational resources saving. 
Some noticeable examples are the \emph{facial reduction} preprocessing steps used for accelerating the linear programming solvers \citep{Markowitz56, Brearley_Mitra_William75} and conic programming \citep{Borwein_Wolkowicz81}, we refer to \citep{Meszaros_Suhl03, Drusvyatskiy_Wolkowicz17} for recent reviews. Another applications can also be found in \citep{Michelot86} for projecting onto the simplex and $\ell_1$ ball in \citep{Condat16} or data preprocessing before application of statistical methods \citep{Fan_Lv2008} in high dimensional settings.
In an optimization problem, screening rules eliminate the variables that have no influence on the set of optimal coefficients.
Recently, \citep{ElGhaoui_Viallon_Rabbani12} have introduced the \emph{safe screening rules}, which ignore non-active variables in Lasso or non-support vectors for SVM, without false exclusions.
It is a versatile optimization technique useful for many machine learning tasks, see \cite{Xiang_Xu_Ramadge11, Ogawa_Suzuki_Takeuchi13, Wang_Zhou_Liu_Wonka_Ye14, Fercoq_Gramfort_Salmon15, Johnson_Guestrin15, Shibagaki_Karasuyama_Hatano_Takeuchi16, Raj_Olbrich_Gartner_Scholkpof_Jaggi16} to name a few.
However, in the existing safe screening studies, a case by case safe screening rule is developed for each problem, and there is no unified understanding for which class of optimization problems safe screening are possible and how efficient the screening can be.
We summarize our contributions as follow:
\begin{itemize}
\item We provide a simple setting for explicitly identifying optimal active sets in convex composite optimization problems with separable regularization.
Our approach combines a natural property of the subdifferential set with optimality conditions. It allows us to subsume the previously introduced screening rules in a unified framework.

\item Relying on a recent work on the convergence of the duality gap \cite{Dunner_Forte_Takac_Jaggi16}, we can easily provide (for safe screening of features using smooth losses or for safe screening of observations with a strongly-convex penalty) an algorithm-independent complexity analysis of the (finite) active set identification.
Then, the latter holds for any converging optimization scheme as soon as it is endowed with screening rules.

\item We discuss some acceleration strategies based on a combination of safe and relaxed safe rules. Several popular strategies such as strong rules \cite{Tibshirani_Bien_Friedman_Hastie_Simon_Tibshirani12} and some recent working sets methods \cite{Johnson_Guestrin15, Johnson_Guestrin16, Massias_Vaiter_Gramfort_Salmon19} can then be generalized to a larger set of optimization problems.

\end{itemize}


\paragraph{Notation.}
Given a proper, closed and convex function $f: \bbR^n \to \bbR \cup \{+\infty\}$, we denote $\dom f = \{x \in \bbR^n: f(x) < +\infty\}$. The Fenchel-Legendre conjugate of $f$ is the function $f^*:\bbR^n \to \bbR \cup \{+\infty\}$ defined by
$$f^*(x^*) = \sup_{x \in \dom f} \langle x^* , x \rangle - f(x) \enspace.$$ 
The subdifferential of a proper function $f$ at $x$ is the set
\begin{equation}\label{eq:definition_subdifferential}
\partial f(x) = \{v \in \bbR^n: f(z) \geq f(x) + \langle v, z - x \rangle, \forall z \in \bbR^n \} \enspace.
\end{equation}
The support function of a nonempty set $C$ is defined as $\sfunc{C} (x) = \sup_{c \in C} \langle c,x \rangle$. If $C$ is closed, convex and contains $0$, we define its polar function as $\sfunc{C}^{\circ}(x^*) = \sup_{\sfunc{C} (x) \leq 1} \langle x^*, x \rangle$.
The interior (resp. boundary) of a set $C$ is denoted $\interior C$ (resp. $\bd C$). 
We denote by $[T]$ the set $\{1, \ldots, T\}$ for any non zero integer $T$.
The vector of observations is $y \in \bbR^n$ and the design matrix $X= [x_1,\dots,x_n]^\top = [X_1,\dots, X_p] \in \bbR^{n\times p}$ has $n$ observations row-wise, and $p$ features (column-wise). A group of features is a subset $g \subset [p]$ and $|g|$ is its cardinality.  The set of groups is denoted by $\mathcal{G}$. We denote by $\beta_g$ the vector in $\bbR^{|g|}$ which is the restriction of $\beta$ to the indices in $g$. We also use the notation $X_g \in \bbR^{n \times |g|}$ to refer to the sub-matrix of X assembled from the columns with indices $j \in g$.



\section{General framework}
\label{sec:general_framework}

We consider composite optimization problems involving a sum of a data fitting function plus a separable group regularization function $\Omega(\beta) = \sum_{g \in \mathcal{G}} \Omega_g(\beta_g)$ that enforces specific structure such as sparsity in the optimal solutions:
\begin{equation}\label{eq:general_optim}
\hat\beta \in \argmin_{\beta \in \bbR^p} f(X\beta) + \Omega(\beta) = P(\beta) \enspace.
\end{equation}
We will assume that the functions $f$ and $\Omega$ are proper, lower-semicontinuous and convex. We also assume that $\mathrm{Im}(X) \cap \dom(f)$ is non empty, the primal problem admits a solution
 and there exists a vector $\beta$ in $\dom(\Omega)$ such that $f$ is continuous at $X\beta$.
Such a formulation often arises in statistical learning in a context of regularized empirical risk minimization \cite{ShalevShwartz_BenDavid14}.

The main purpose of screening rules is to identify and eliminate the irrelevant features/samples during (or before) an optimization process for solving Problem~\eqref{eq:general_optim}, to reduce the memory and computational footprint.
For instance, many features $X_j$ for some $j$ in $[p]$ are expected to be irrelevant in the Lasso estimator \cite{Tibshirani96}, defined as
$\hat\beta \in \argmin_{\beta\in\bbR^p} \frac{1}{2}\norm{y - X\beta}^2 + \lambda \norm{\beta}_1$ (for some observation vector $y\in \bbR^n$ and regularization parameter $\lambda >0)$.
They correspond to coordinates where $\hat\beta_j = 0$.
Exploiting the known sparsity of the solution, safe screening rules \cite{ElGhaoui_Viallon_Rabbani12} discard features prior to starting a sparse solver.
Computational gains are obtained from the reduction of the dimension.
A similar example is the SVM classifier relying on the hinge loss, defined as
$\hat\beta \in \argmin_{\beta\in\bbR^p} \sum_{i \in [n]}\max(0, 1 - y_i x_{i}^{\top}\beta) +  \frac{\lambda}{2}\norm{\beta}^2$ in which some sample $x_i$ are expected to be irrelevant: $\hat \theta_i = 0$ for some $i$ in $[n]$ where $\hat\theta$ is the associated dual solution.
Identifying irrelevant variables in optimization is not restricted to sparse problems. We show how it is intimately related to the non smoothness of the objective function and present a simple and self contained framework for developing screening rules.

\subsection{Identification rules}

We introduce the main lemma that captures a natural property of the subdifferential~\eqref{eq:definition_subdifferential}, that allows us to obtain a simple and unified presentation of many screening rules recently introduced in the literature.

\begin{lemma}[Separation of subdifferentials] \label{lm:fundamental_lemma}
Let $P$ be a proper function and $z$ such that $\interior \partial P (z) \neq \emptyset$. Then we have $\interior (\partial P (z)) \cap \partial P (z') = \emptyset$ for all $z \neq z'$.
\end{lemma}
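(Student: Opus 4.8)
The plan is to exploit the \emph{monotonicity} of the subdifferential operator defined in~\eqref{eq:definition_subdifferential}. Note that this monotonicity holds for \emph{any} proper function $P$ and does not require convexity, precisely because~\eqref{eq:definition_subdifferential} is a global inequality. Indeed, if $v \in \partial P(z)$ and $v' \in \partial P(z')$, then adding the two subgradient inequalities $P(z') \geq P(z) + \langle v, z'-z \rangle$ and $P(z) \geq P(z') + \langle v', z-z' \rangle$ gives $\langle v - v', z - z' \rangle \geq 0$. This is the only structural fact the argument rests on.

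From there I would argue by contradiction. Suppose some $v$ lies in $\interior(\partial P(z)) \cap \partial P(z')$ with $z \neq z'$. Applying monotonicity to $v \in \partial P(z')$ and an arbitrary $w \in \partial P(z)$ yields $\langle w - v, z - z' \rangle \geq 0$ for every $w \in \partial P(z)$; in other words, the direction $z - z'$ defines a supporting halfspace of $\partial P(z)$ at the point $v$. But $v$ is an \emph{interior} point of $\partial P(z)$, so there is a ball $B(v,\varepsilon) \subseteq \partial P(z)$, and for $0 < t < \varepsilon / \norm{z - z'}$ the perturbed point $w = v - t\,(z - z')$ still belongs to $\partial P(z)$. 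For this choice $\langle w - v, z - z' \rangle = -t\,\norm{z - z'}^2 < 0$ since $z \neq z'$, contradicting the inequality above. Hence no such $v$ exists, which is exactly the claim.

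I do not expect a serious obstacle here; the only points needing a little care are (i) observing that monotonicity is available without a convexity assumption on $P$, and (ii) checking that interiority of $v$ genuinely allows a definite displacement in the direction $-(z-z')$ while remaining inside $\partial P(z)$. One could instead conclude via a separation argument — a hyperplane supporting a set at one of its interior points must contain the whole set, which is impossible when the set has nonempty interior — but the explicit perturbation above seems the most economical route.
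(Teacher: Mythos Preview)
Your proof is correct and is essentially the same as the paper's: both exploit that $v$ (the paper's $g$) is interior to $\partial P(z)$ to perturb it in the direction $z'-z$ while staying in $\partial P(z)$, and then combine the two subgradient inequalities to force $\|z-z'\|^2 \le 0$. The only difference is cosmetic---you first isolate the monotonicity inequality $\langle w-v, z-z'\rangle \ge 0$ and then plug in the perturbation, whereas the paper chains the two subgradient inequalities directly to reach $P(z) \ge P(z) + \alpha\|z'-z\|^2$; the underlying mechanism is identical.
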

\begin{proof}
Let $z'$ such that there exists $g$ in $\interior \partial P (z) \cap \partial P (z')$.
Now $g$ in the open set $\interior \partial P (z)$ implies that there exists $\alpha > 0$ such that $g_{\alpha} := g + \alpha (z' - z) \in \partial P (z)$. Then
\begin{align*}
P(z) &\geq P(z') + \langle g, z - z' \rangle \\
&\geq P(z) + \langle g_{\alpha}, z' - z \rangle + \langle g, z - z' \rangle\\
& = P(z) + \alpha \norm{z' - z}_{2}^{2} \enspace,
\end{align*}
where the first (resp. the second) inequality comes from $g \in \partial P(z')$ (resp. $g_{\alpha} \in \partial P(z)$). Thus $z' = z$.
\end{proof}

\begin{remark}
Without precautions, the interior cannot be replaced by the relative interior in \Cref{lm:fundamental_lemma}. For example, the function $P(z) = |z_1 - z_2|$ defined on $\bbR^2$ does not satisfy the assumption of the Lemma. Indeed, any $z=(z_1,z_2) \in \bbR^{2}$ for $z_1=z_2$, the subdifferential $\partial P(z) = \{(a, -a):\, a \in [-1, 1] \}$ is a one dimensional line segment in $\bbR^2$ with empty interior. Also for non-convex function, note that the Lemma is applicable only when the subdifferential set defined in \Cref{eq:definition_subdifferential} is non-empty.
\end{remark}

The contrapositive of \Cref{lm:fundamental_lemma} leads to a simple identification rule:
\begin{equation}\label{eq:screening_principle}
\interior \partial P (z) \cap \partial P (z') \neq \emptyset \Longrightarrow z = z' \enspace.
\end{equation}
Fermat's rule provides the optimality condition:
\begin{align*}
0 \in \partial P(\hat \beta) = X^\top \partial f(X \hat\beta) + \partial \Omega(\hat\beta) \enspace,
\end{align*}
which is equivalent to
\begin{align}\label{eq:fermat_rule}
X^\top \hat \theta \in \partial \Omega(\hat\beta) \text{ for any } \hat \theta \in -\partial f(X \hat\beta) \enspace.
\end{align}

Suppose now that there exists $\beta_g^\star$ such that $\interior \partial \Omega_g(\beta_{g}^{\star})$ is non empty. In the case $\Omega_g(\beta_g) = \|\beta_g\|$ for instance, we can take $\beta_g^\star = 0$.
For any group of features $g$ in $\mathcal{G}$ such that $X_{g}^{\top} \hat \theta \in \partial \Omega_g(\hat\beta_g)$, and any vector $\beta_{g}^{\star}$  such that $\interior \partial \Omega_g(\beta_{g}^{\star})$ is non empty, we have
\begin{align*}
X_{g}^{\top} \hat \theta \in \interior \partial \Omega_g(\beta_{g}^{\star}) &\Longrightarrow
\interior \partial \Omega_g(\beta_{g}^{\star}) \cap \partial \Omega_g(\hat\beta_g) \neq \emptyset\\
&\overset{\eqref{eq:screening_principle}}{\Longrightarrow} \beta_{g}^{\star} = \hat\beta_g \enspace.
\end{align*}
This relation means that the group of feature $X_g$ is irrelevant and can be discarded in the Problem~\eqref{eq:general_optim} \ie $\hat \beta_g$ is identified to be equal to $\beta_{g}^{\star}$, whenever $X_{g}^{\top} \hat \theta$ belongs to $\interior \partial \Omega_g(\beta_{g}^{\star})$.
However, since $\hat \theta$ depends on the \textbf{unknown} solution $\hat\beta$, this rule is of limited use. Fortunately, it is often possible to construct a set $\mathcal{R} \subset \bbR^{n}$, called a \emph{safe region}, that contains such a $\hat \theta$. This observation leads to the following result.
%
\begin{proposition}[Feature-wise screening rule] \label{prop:feature_screening}
Let $\beta_{g}^{\star}$ such that $\interior \partial \Omega_g(\beta_{g}^{\star}) \neq \emptyset$ and let $\mathcal{R}$ be a set containing $\hat \theta$. Then
\begin{align*}
	X_{g}^{\top} \mathcal{R} \subset \interior \partial \Omega_g(\beta_{g}^{\star}) \Longrightarrow \beta_{g}^{\star} = \hat\beta_g \enspace.
\end{align*}
\end{proposition}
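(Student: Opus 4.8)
The plan is to derive the implication directly from the feature-wise identification argument already assembled in the text, simply quantifying over the safe region $\mathcal{R}$ rather than over the single (unknown) point $\hat\theta$. First I would recall the ingredients: by Fermat's rule in the form \eqref{eq:fermat_rule}, the dual optimal $\hat\theta \in -\partial f(X\hat\beta)$ satisfies $X_g^\top \hat\theta \in \partial \Omega_g(\hat\beta_g)$ for every group $g$; and by hypothesis $\beta_g^\star$ is a point with $\interior \partial \Omega_g(\beta_g^\star) \neq \emptyset$, so that \Cref{lm:fundamental_lemma} applies to the separable block function $\Omega_g$ at the point $\beta_g^\star$.

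Next I would carry out the one-line chain of set inclusions. Assume $X_g^\top \mathcal{R} \subset \interior \partial \Omega_g(\beta_g^\star)$. Since $\hat\theta \in \mathcal{R}$, we get $X_g^\top \hat\theta \in X_g^\top \mathcal{R} \subset \interior \partial \Omega_g(\beta_g^\star)$. Combined with $X_g^\top \hat\theta \in \partial \Omega_g(\hat\beta_g)$ from Fermat's rule, this shows the point $X_g^\top\hat\theta$ lies in both $\interior \partial \Omega_g(\beta_g^\star)$ and $\partial \Omega_g(\hat\beta_g)$, hence
\begin{equation*}
\interior \partial \Omega_g(\beta_g^\star) \cap \partial \Omega_g(\hat\beta_g) \neq \emptyset \enspace.
\end{equation*}
Then I invoke the identification principle \eqref{eq:screening_principle} (equivalently, the contrapositive of \Cref{lm:fundamental_lemma}) applied to $P = \Omega_g$, $z = \beta_g^\star$, $z' = \hat\beta_g$, which yields $\beta_g^\star = \hat\beta_g$. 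This is exactly the displayed conclusion, so the proof closes.

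There is no serious obstacle here: the proposition is essentially a restatement of the derivation immediately preceding it, with the only genuine point being that replacing the singleton $\{\hat\theta\}$ by any superset $\mathcal{R}$ is harmless precisely because we require the \emph{whole} image $X_g^\top\mathcal{R}$ to sit inside the open set $\interior\partial\Omega_g(\beta_g^\star)$ — monotonicity of the image map under $X_g^\top$ does the rest. The only things to be careful about are that $\hat\theta$ is indeed well defined and belongs to $\mathcal{R}$ (guaranteed by the standing assumptions on $f$, $\Omega$, $X$ that ensure strong duality and the existence of a dual solution, and by the hypothesis on $\mathcal{R}$), and that \Cref{lm:fundamental_lemma} is legitimately applicable to $\Omega_g$ at $\beta_g^\star$, which is exactly what the hypothesis $\interior \partial \Omega_g(\beta_g^\star) \neq \emptyset$ provides.
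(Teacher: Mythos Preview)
Your proof is correct and follows essentially the same route as the paper: the proposition is not given a separate proof environment there, but is presented as the immediate consequence of the derivation preceding it (Fermat's rule \eqref{eq:fermat_rule} plus the identification principle \eqref{eq:screening_principle}), together with the trivial observation that $\hat\theta\in\mathcal{R}$ forces $X_g^\top\hat\theta\in X_g^\top\mathcal{R}$. Your write-up simply makes this explicit, and there is nothing to add.
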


\Cref{prop:feature_screening} provides a general recipe for applying (safe) screening rule in optimization Problem~\eqref{eq:general_optim} without assumption on the algebraic expression of functions $f$ and $\Omega$.

\paragraph{Choice of the safe region $\mathcal{R}$.}
When $f$ is $L$-Lipschitz, for any $v \in -\partial f(X \hat\beta)$ as in \cref{eq:fermat_rule}, we have $\normin{v} \leq L \norm{X}$ (see \citep[Lemma 2.6]{Shalev_12}). The set $\mathcal{R}$ can be taken as a ball of radius $L \norm{X}$. When $f$ is differentiable with Lipschitz gradient, we will see that $\hat \theta$ belongs to a ball centered around any dual estimate $\theta$ and whose radius quantifies the approximation quality.
Thus, the better the estimate $\theta$, the smaller the safe region $\mathcal{R}$, \ie the more efficient the screening rule is.

\paragraph{Dual formulation.}
Similar results naturally hold in the dual space. Under the classical Fenchel-Rockafellar duality \citep[Chapter 31]{Rockafellar97}, the dual formulation of Problem~\eqref{eq:general_optim} reads:
\begin{equation}\label{eq:dual_problem_screening}
 \hat \theta \in \argmax_{\theta \in \bbR^n} -f^{*}(-\theta) - \Omega^{*}(X^{\top}\theta) =  D(\theta) \enspace.
\end{equation}
The solutions $\hat \beta$ in the primal and $\hat \theta$ in the dual are linked by the optimality condition in \Cref{eq:fermat_rule}.
In \Cref{eq:dual_problem_screening}, the role of $f$ and $\Omega$ are flipped and we have the next result.
\begin{proposition}[Sample-wise screening rule] \label{prop:sample_screening}
Suppose that $f(x) = \sum_{i=1}^n f_i(x_i)$.
 Let $\theta^{\star} \in \bbR^n$ be a vector such that for some $i\in[n]$, $\interior \partial f_{i}^{*}(\theta_{i}^{\star}) \neq \emptyset$ and let $\mathcal{R}\subset \bbR^p$ be a set containing a solution $\hat \beta$ of Problem~\eqref{eq:general_optim}.
 Then $$x_{i}^{\top}\mathcal{R} \subset \interior\partial f_{i}^{*}(\theta_{i}^{\star}) \Longrightarrow - \hat \theta_i = \theta_{i}^{\star} \enspace.$$
\end{proposition}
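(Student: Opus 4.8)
The plan is to mirror, in the dual, the argument that produced \Cref{prop:feature_screening} in the primal. The key observation is that the dual problem~\eqref{eq:dual_problem_screening} has exactly the same structure as the primal~\eqref{eq:general_optim}, with the roles of $f$ and $\Omega$ exchanged: maximizing $-f^*(-\theta) - \Omega^*(X^\top\theta)$ is minimizing $\Omega^*(X^\top\theta) + f^*(-\theta)$, where now $f^*(-\theta) = \sum_{i=1}^n f_i^*(-\theta_i)$ plays the role of the separable regularizer (with the sign flip $\theta \mapsto -\theta$) and $\Omega^*(X^\top\cdot)$ plays the role of the data-fitting term. So the whole machinery of \Cref{lm:fundamental_lemma}, the identification principle~\eqref{eq:screening_principle}, and Fermat's rule applies verbatim to the dual.

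Concretely, first I would write Fermat's rule for the dual problem: $0 \in \partial\bigl(\Omega^*(X^\top\cdot)\bigr)(\hat\theta) + \partial\bigl(f^*(-\cdot)\bigr)(\hat\theta)$, which by the chain rule and separability of $f^*$ gives $X\hat\beta \in -\partial f^*(-\hat\theta)$ componentwise, i.e. $x_i^\top\hat\beta \in \partial f_i^*(-\hat\theta_i)$ for each $i\in[n]$ — this is just the dual reading of \Cref{eq:fermat_rule}, using that $\hat\beta$ is a primal solution linked to $\hat\theta$. Then, for a fixed coordinate $i$ and a point $\theta_i^\star$ with $\interior\partial f_i^*(\theta_i^\star)\neq\emptyset$, I would apply the identification rule~\eqref{eq:screening_principle} to the function $f_i^*$: if $x_i^\top\hat\beta \in \interior\partial f_i^*(\theta_i^\star)$, then since also $x_i^\top\hat\beta \in \partial f_i^*(-\hat\theta_i)$, we get $\interior\partial f_i^*(\theta_i^\star)\cap\partial f_i^*(-\hat\theta_i)\neq\emptyset$, hence $\theta_i^\star = -\hat\theta_i$. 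Finally, replacing the unknown quantity $x_i^\top\hat\beta$ by the containment hypothesis $\hat\beta\in\mathcal{R}$: if $x_i^\top\mathcal{R}\subset\interior\partial f_i^*(\theta_i^\star)$, then in particular $x_i^\top\hat\beta\in\interior\partial f_i^*(\theta_i^\star)$, and the conclusion $-\hat\theta_i = \theta_i^\star$ follows.

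The main technical point to be careful about is the sign and the chain rule in the dual Fermat step: one must justify $\partial\bigl(f^*(-\cdot)\bigr)(\hat\theta) = -\partial f^*(-\hat\theta)$ and the subdifferential sum rule $\partial(F+G) = \partial F + \partial G$ for the dual objective, which is where the qualification hypotheses stated after~\eqref{eq:general_optim} (nonemptiness of $\mathrm{Im}(X)\cap\dom f$, existence of $\beta\in\dom\Omega$ with $f$ continuous at $X\beta$) enter, guaranteeing strong duality and that $\hat\beta$, $\hat\theta$ are genuinely linked by~\eqref{eq:fermat_rule}. Everything else is a direct transcription of the primal argument, so the proof is short; the only real obstacle is bookkeeping the reversal of roles cleanly. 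I would present it in two or three lines, citing \Cref{lm:fundamental_lemma} (via~\eqref{eq:screening_principle}) and \Cref{eq:fermat_rule} as the two ingredients, exactly parallel to the paragraph preceding \Cref{prop:feature_screening}.
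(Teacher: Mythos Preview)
Your proposal is correct and follows exactly the route the paper intends: the paper does not spell out a proof of \Cref{prop:sample_screening} but simply remarks that ``the same properties can be recovered by duality,'' and your argument is precisely that dualization---apply Fermat's rule to the dual problem to get $x_i^\top\hat\beta \in \partial f_i^*(-\hat\theta_i)$ (equivalently, read \Cref{eq:fermat_rule} componentwise via Fenchel--Young), then invoke \Cref{lm:fundamental_lemma} through~\eqref{eq:screening_principle}, and finally pass from $\hat\beta$ to the safe region $\mathcal{R}$. Your bookkeeping of the sign flip and the qualification conditions is accurate; the proof is indeed two or three lines once the parallel with \Cref{prop:feature_screening} is made explicit.
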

For simplicity, we will mostly restrict the discussion on the primal formulation since the same properties can be recovered by duality.

\subsection{Examples}

To apply the screening rule within the introduced setting, we only have to identify the subdifferential of $\Omega$ and points where it has a non empty interior. We present a few examples popular in machine learning and statistics, as well as other ones commonly met in convex optimization \cite{HiriartUrruty_Lemarechal01}.

\paragraph{Sparsity inducing regularization.}
The Lasso and Elastic net are examples where each group reduces to a single feature $g = \{j\}$ for $j\in [p]$.
In both cases the feature-wise penalties can be written respectively $\Omega_g(\beta_g) = |\beta_j|$, and $\Omega_g(\beta_g) = |\beta_j| + \alpha \beta_{j}^{2}/2$ for $\alpha\geq0$.
For $j \in [p]$ and $\beta_{j}^{\star} = 0$, we have $\partial \Omega_{j}(\beta_{j}^{\star}) = [-1, 1]$ which has a non empty interior equal to $(-1, 1)$.
We also have cases where, $\Omega_g = \norm{\cdot}$ is a norm. It includes examples such as Group Lasso \cite{Yuan_Lin06} and Sparse-Group Lasso \citep{Simon_Friedman_Hastie_Tibshirani13} for instance.
In that case, $\beta_{g}^{\star} = 0 \in \bbR^{|g|}$ and $\partial \Omega_g(\beta_{g}^{\star})$ is equal to the unit ball \wrt to the dual norm of $\Omega_g$, which also have a non empty interior.
These examples also easily extend to gauges and support functions (see \cite{HiriartUrruty_Lemarechal01} for definitions). For instance, Let $Q \in \bbR^{|g| \times |g|}$ be a symmetric positive semi-definite matrix and define $\Omega_g(\beta_g) = \sqrt{\langle Q\beta_g, \beta_g \rangle}$. Then $\beta_{g}^{\star}$ can be any element of $\mathrm{Ker}(Q)$ and $\partial \Omega_g(\beta_{g}^{\star}) = Q^{1/2}\ball(0, 1)$, where $\ball(0, 1)$ is the unit ball for $\norm{\cdot}$ in $\bbR^{|q|}$.

\paragraph{Non negativity constraint:} $\Omega_j(\beta_j) = \iota_{\bbR_+}(\beta_j)$ with $g=\{j\}$. In this case, $\beta_{j}^{\star} = 0$ and $\partial \Omega(\beta_{j}^{\star}) = \bbR_{-}$. This formulation appears in non-negative Least-square, non-negative Lasso and simplex constrained problems.

\paragraph{Box constraint:} $\Omega_j(\beta_j) = \iota_{[a, b]}(\beta_j)$ with $g=\{j\}$. In this case $\beta_{j}^{\star}$ belongs to $\{a, b\}$ and $\partial \Omega_j(\beta_{j}^{\star})=\bbR_{-}$ (resp. $\bbR_{+}$) if $\beta_{j}^{\star}$ is equal to $a$ (resp. $b$).

\paragraph{Hinge:} $\Omega_j(\beta_j) = \max(0, 1 - \beta_j)$ with $g=\{j\}$, we have $\beta_{j}^{\star} = 1$ and $\partial\Omega(\beta_{j}^{\star})= [-1, 0]$ which have a non empty interior $(-1, 0)$. This is used in $\epsilon$-insensitive loss which flattened the $\ell_1$ norm around zero \ie $\Omega_j(\beta_j) = \max(0, |\beta_j| - \epsilon)$ for some $\epsilon \geq 0$.

\paragraph{Distance functions:}
Let $C$ be a closed convex subset of $\bbR^{|g|}$ and $\Omega_g(\beta_g) = \min\{\norm{z - \beta_g}: z \in C\}$. In that case, $\partial\Omega_g(\beta_{g}^{\star}) = N_C(\beta_{g}^{\star}) \cap \ball(0, 1)$ where $N_C(w)$ is the normal cone of $C$ at $w\in \bbR^{|g|}$. Hence, $\beta_{g}^{\star}$ can be any vector in $\bd C$ such that $N_C(\beta_{g}^{\star})$ has a non empty interior.
\paragraph{Piecewise affine functions:}
Let us consider a set of real values $\{r_1, \cdots, r_m\}$ and vectors $\{s_1, \cdots, s_m\}$ in $\bbR^{|g|}$ for an integer $m \geq 2$. One can define $\Omega_g(\beta_g) = \max\{r_j + \langle s_j, \beta_g \rangle : j \in [m]\}$, then $\partial\Omega_g(\beta_{g}^{\star})$ is the convex hull of the set $\{s_j : j \in J(\beta_{g}^{\star})\}$ where $J(w) = \{j \in [m] : \Omega_g(w) = r_j + \langle s_j, w\rangle\}$ and $\beta_{g}^{\star}$ can be chosen as any vector such that the matrix $(s_j)_{j \in J(\beta_{g}^{\star})}$ has full rank. This generalize the previous examples of $\ell_1$ regularization and hinge loss. We refer to \citep[Chapter D]{HiriartUrruty_Lemarechal01} for a further generalization to supremum over a collection of convex function, more sophisticated examples and detailed subdifferential calculus rules.

\section{Explicit active set identification}
\label{sec:Active_Set_Identification}

The main interest of using screening rules in optimization algorithms is to focus the computational efforts on the most important variables (or samples in the dual).
To do so, one needs to explicitly and safely identify parts of the solution vector \ie detect some group $g$ such that we can guarantee that $\hat\beta_g = \beta_{g}^{\star}$ where the latter is such that $\interior \partial \Omega(\beta_{g}^{\star})$ is not empty (which is a necessary condition for identifiability in our framework). This is particularly well suited for proximal (block) coordinate descent method as this type of method can easily ignore non influential (block) coordinates.

Any time a safe region $\mathcal{R}$ is considered for a safe screening test (following \Cref{prop:feature_screening}), one can associate to it a \emph{safe active set} consisting of the features that cannot yet be removed.

\begin{definition}[Feature-wise (safe) active sets]\label{def:active_set}

Let $\beta_{g}^{\star}$ be a vector such that $\interior \partial \Omega_g(\beta_{g}^{\star})$ is non empty. The set of (group) active features at $\beta_{g}^{\star}$ is defined as:
\begin{align*}
\mathcal{A} &:= \mathcal{A}(\hat\theta) = \left\{g \in \mathcal{G}:\; X_{g}^{\top}\hat \theta \notin \interior \partial \Omega_g(\beta_{g}^{\star}) \right\} \enspace,
%
\end{align*}
where $\hat \theta$ is a dual optimal solution in \Cref{eq:dual_problem_screening}.
Moreover, if $\mathcal{R}$ is a safe region, its corresponding set of (group) safe active features at $\beta_{g}^{\star}$ is defined as
\begin{align*}
\mathcal{A}_{\mathcal{R}} &:= \left\{g \in \mathcal{G}:\; X_{g}^{\top}\mathcal{R} \not \subset \interior \partial \Omega_g(\beta_{g}^{\star}) \right\} \enspace.
\end{align*}
The complements (\ie the set of non active groups) of $\mathcal{A}$ and $\mathcal{A}_{\mathcal{R}}$ are denoted by $\mathcal{Z}$ and $\mathcal{Z}_{\mathcal{R}}$.
\end{definition}

\subsection{Computation of screening tests}

The set inclusion tests presented in \Cref{prop:feature_screening,prop:sample_screening} have a simple numerical implementation.
Since the subdifferential $\partial \Omega_g(\beta_{g}^{\star})$ is a closed convex set, for any region $\mathcal{R}$, the screening test
$ X_{g}^{\top}\mathcal{R} \subset \interior \partial \Omega_g(\beta_{g}^{\star}) $
can be evaluated computationally thanks to the following classical lemma that allows to check whether a point belongs to the interior of a closed convex set by means of support function. We recall that for a nonempty set $C$, it is defined as $\sfunc{C} (x) = \sup_{c \in C} \langle c,x \rangle$.

\begin{lemma}[{\cite[Theorem C-2.2.3]{HiriartUrruty_Lemarechal01}}]\label{lm:interior_test}
Let $C_1$ and $C_2$ be nonempty closed convex subset of $\bbR^n$. Then, we have for any direction $d \in \bd\ball(0,1)$,
$$\sfunc{C_1}(d) < \sfunc{C_2}(d) \Longleftrightarrow C_1 \subset \interior C_2 \enspace.$$
\end{lemma}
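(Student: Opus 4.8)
The plan is to establish the equivalence via the well-known characterization of the interior of a closed convex set in terms of strict inequalities of support functions, combined with the duality between set inclusion and comparison of support functions. First I would recall the two classical facts I intend to use. Fact (a): for a nonempty closed convex set $C$, a point $x_0$ lies in $\interior C$ if and only if $\langle x_0, d\rangle < \sfunc{C}(d)$ for every $d \in \bd\ball(0,1)$; equivalently $\sfunc{\{x_0\}}(d) < \sfunc{C}(d)$ for all unit directions $d$. This is because $x_0 \in \interior C$ means a whole ball $x_0 + \varepsilon\ball(0,1) \subset C$, which by monotonicity of support functions is equivalent to $\sfunc{\{x_0\}}(d) + \varepsilon \le \sfunc{C}(d)$ for all unit $d$, and a compactness argument on the unit sphere lets one pass between the uniform ($\varepsilon$) and pointwise (strict inequality) versions. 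Fact (b): for nonempty closed convex sets, $A \subset B \iff \sfunc{A}(d) \le \sfunc{B}(d)$ for all $d$.

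The main step is then to upgrade from a single point $x_0$ to an arbitrary nonempty closed convex set $C_1$. For the direction ``$\Leftarrow$'': if $C_1 \subset \interior C_2$, then in particular each $x_0 \in C_1$ satisfies $x_0 \in \interior C_2$, so by Fact (a), $\langle x_0, d\rangle < \sfunc{C_2}(d)$ for every unit $d$. I would take the supremum over $x_0 \in C_1$; the delicate point is that a supremum of quantities each strictly below $\sfunc{C_2}(d)$ need not itself be strictly below — so here I would instead use the uniform version: $C_1 \subset \interior C_2$ with $C_1$ possibly unbounded still forces a uniform margin, i.e. there is $\varepsilon > 0$ with $C_1 + \varepsilon\ball(0,1) \subset C_2$ (this is where I expect the main obstacle, since it is false in general for unbounded $C_1$ unless one is careful — but note the statement only needs the \emph{forward} implication to produce \emph{some} strict gap; one works direction-by-direction: fix $d$, and since the function $x_0 \mapsto \sfunc{C_2}(d) - \langle x_0,d\rangle$ is affine and positive on $C_1$, and we only need $\sup_{x_0\in C_1}\langle x_0,d\rangle = \sfunc{C_1}(d) < \sfunc{C_2}(d)$, the affine function being bounded below by $0$ on $C_1$ combined with $\interior C_2 \neq\emptyset$ closed convex forces the strict inequality; I would spell this out using that $\sfunc{C_2}(d)$ is attained or approached while staying interior). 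For the direction ``$\Rightarrow$'': suppose $\sfunc{C_1}(d) < \sfunc{C_2}(d)$ for all unit $d$. Pick any $x_0 \in C_1$. Then $\sfunc{\{x_0\}}(d) = \langle x_0,d\rangle \le \sfunc{C_1}(d) < \sfunc{C_2}(d)$ for all unit $d$, so by Fact (a), $x_0 \in \interior C_2$. Since $x_0 \in C_1$ was arbitrary, $C_1 \subset \interior C_2$.

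I would present the argument by first reducing the ``$\Rightarrow$'' implication to the single-point case as above (which is clean), and then handle ``$\Leftarrow$'' by the direction-wise affine argument, being explicit that $\interior C_2 \neq \emptyset$ together with $C_1$ nonempty closed convex is what makes the strict inequality survive passing to the supremum over $C_1$. The step I expect to be the main obstacle is precisely this last one — ensuring the strict inequality $\sfunc{C_1}(d) < \sfunc{C_2}(d)$ is not weakened to $\le$ when $C_1$ is unbounded — and the resolution is that for a fixed direction $d$ one does not need a uniform bound over all $d$ simultaneously, only that along this one direction the closed convex set $C_2$ contains a ball around each point of $C_1$, hence $\sfunc{C_2}(d) \ge \sfunc{C_1}(d) + \varepsilon_d$ for some $\varepsilon_d > 0$. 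Since the paper cites this as \cite[Theorem C-2.2.3]{HiriartUrruty_Lemarechal01}, in the write-up I would keep the proof short, invoking Fact (a) as the known ingredient and only detailing the passage from points to sets.
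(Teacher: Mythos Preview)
The paper does not prove this lemma; it is stated with a citation to \cite[Theorem~C-2.2.3]{HiriartUrruty_Lemarechal01} and used as a black box. So there is no ``paper's own proof'' to compare against. That said, let me comment on your argument itself.

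Your proof of the direction ``$\Rightarrow$'' (strict support-function inequality for all unit $d$ implies $C_1 \subset \interior C_2$) is correct and clean: reduce to singletons via Fact~(a), which is exactly what the cited theorem in Hiriart-Urruty--Lemar\'echal provides. This is also the only direction the paper actually uses (see \Cref{prop:general_safe_rule}).

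For the direction ``$\Leftarrow$'' you correctly sense trouble, but your proposed fix does not work, because the statement is in fact \emph{false} in the generality claimed. Take $C_2 = \{(x,y)\in\bbR^2 : y \ge 0\}$ and $C_1 = \{(x,y)\in\bbR^2 : y \ge e^{-x}\}$. Both are nonempty, closed and convex, and $C_1 \subset \interior C_2 = \{y>0\}$ since $e^{-x}>0$. Yet for $d=(0,-1)$ one has $\sfunc{C_1}(d) = -\inf_x e^{-x} = 0 = \sfunc{C_2}(d)$, so the strict inequality fails. Your heuristic ``the affine function $x_0 \mapsto \sfunc{C_2}(d) - \langle x_0,d\rangle$ is positive on $C_1$, hence its infimum is positive'' breaks precisely here: a positive affine function on an unbounded convex set can have infimum zero.

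The repair is an extra hypothesis: if $C_1$ is \emph{compact}, then $C_1 \subset \interior C_2$ gives a uniform margin $\varepsilon>0$ with $C_1 + \varepsilon\,\ball(0,1) \subset C_2$, whence $\sfunc{C_1}(d)+\varepsilon \le \sfunc{C_2}(d)$ for every unit $d$. In the paper's applications $C_1 = X_g^\top \mathcal{R}$ with $\mathcal{R}$ a ball, so $C_1$ is compact and nothing is lost; but as written the biconditional needs this caveat.
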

%

%
By applying \Cref{lm:interior_test} to the closed convex sets $C_1 = X_{g}^{\top} \mathcal{R}$ and $C_2 = \partial \Omega_g(\beta_{g}^{\star})$, we obtain a computational tool for evaluating the screening tests.
\begin{proposition} \label{prop:general_safe_rule}
Let $\mathcal{R}$ be a closed convex set that contains the dual optimal solution $\hat \theta$ in \Cref{eq:dual_problem_screening}. For all group $g$ in $\mathcal{G}$ and for any vector $\beta_{g}^{\star}$ such that $\interior \partial \Omega_g(\beta_{g}^{\star})$ is non empty, $\hat \beta_g = \beta_{g}^{\star}$ whenever for any direction $d\in\bd\ball(0,1)$
\begin{align*}
&\textbf{Screening:}
&\sfunc{\{X_{g}^{\top}\hat \theta\} }(d) < \sfunc{\partial \Omega_g(\beta_{g}^{\star})}(d)\enspace.\\
&\textbf{Safe screening:}
& \sfunc{X_{g}^{\top} \mathcal{R}}(d) < \sfunc{\partial \Omega_g(\beta_{g}^{\star})}(d) \enspace.
\end{align*}
\end{proposition}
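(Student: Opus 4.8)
The plan is to combine the abstract identification result of \Cref{prop:feature_screening} with the support-function characterization of interiority in \Cref{lm:interior_test}: the proposition is essentially a computationally tractable reformulation of the former, obtained by turning the set inclusion $X_g^\top\mathcal{R}\subset\interior\partial\Omega_g(\beta_g^\star)$ into a one-parameter family of scalar inequalities.

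First I would check that the two sets to which \Cref{lm:interior_test} is applied meet its hypotheses. The set $C_2=\partial\Omega_g(\beta_g^\star)$ is nonempty (by assumption its interior is), closed and convex, since the subdifferential of a proper convex function is always closed and convex. For the \textbf{Screening} line, $C_1=\{X_g^\top\hat\theta\}$ is a singleton, hence nonempty, closed and convex, and its support function is the linear form $d\mapsto\langle X_g^\top\hat\theta,d\rangle$. For the \textbf{Safe screening} line, $C_1=X_g^\top\mathcal{R}$ is convex as the linear image of the convex set $\mathcal{R}$ under $z\mapsto X_g^\top z$, and it is nonempty and closed whenever $\mathcal{R}$ is — this is the only place where a regularity requirement on $\mathcal{R}$ enters; for all safe regions used in the paper $\mathcal{R}$ is a ball, so $X_g^\top\mathcal{R}$ is compact, hence closed.

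Then I would apply \Cref{lm:interior_test} with these choices. The stated condition ``$\sfunc{C_1}(d)<\sfunc{C_2}(d)$ for every $d\in\bd\ball(0,1)$'' is, by that lemma, equivalent to $C_1\subset\interior C_2=\interior\partial\Omega_g(\beta_g^\star)$. In the safe case this reads $X_g^\top\mathcal{R}\subset\interior\partial\Omega_g(\beta_g^\star)$, and \Cref{prop:feature_screening} immediately gives $\hat\beta_g=\beta_g^\star$. In the exact case it reads $X_g^\top\hat\theta\in\interior\partial\Omega_g(\beta_g^\star)$; combined with the optimality condition $X_g^\top\hat\theta\in\partial\Omega_g(\hat\beta_g)$ coming from \eqref{eq:fermat_rule}, the sets $\interior\partial\Omega_g(\beta_g^\star)$ and $\partial\Omega_g(\hat\beta_g)$ intersect, so the screening principle \eqref{eq:screening_principle} (\ie the contrapositive of \Cref{lm:fundamental_lemma} applied to $\Omega_g$) forces $\hat\beta_g=\beta_g^\star$.

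I do not expect a genuine obstacle here; the statement is a bookkeeping combination of two earlier results. The one point deserving care is the closedness of $X_g^\top\mathcal{R}$, since the linear image of a closed convex set need not be closed in general: I would either restrict to safe regions with $X_g^\top\mathcal{R}$ closed (bounded $\mathcal{R}$ suffices and covers every example in the paper), or observe that \Cref{lm:interior_test} is used only through the strict support-function inequality, which can be evaluated on $\mathcal{R}$ itself without passing to the closure of its image. It is also worth noting, as a by-product, that $\sfunc{\partial\Omega_g(\beta_g^\star)}(d)$ is precisely the dual-norm-type quantity appearing in the classical screening rules, so the resulting test is cheap.
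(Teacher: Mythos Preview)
Your proposal is correct and follows exactly the route the paper intends: the proposition is stated immediately after the sentence ``By applying \Cref{lm:interior_test} to the closed convex sets $C_1 = X_{g}^{\top} \mathcal{R}$ and $C_2 = \partial \Omega_g(\beta_{g}^{\star})$, we obtain a computational tool for evaluating the screening tests'', and no further proof is given. Your write-up simply fleshes out that sentence, and your remark on the closedness of $X_g^\top\mathcal{R}$ is a legitimate technical caveat the paper leaves implicit.
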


The \emph{safe screening} rule consists in removing the $g$-th group from the optimization process whenever the previous test is satisfied, since then $\hat \beta_{g}$ is guaranteed to be equal to $\beta_{g}^{\star}$. Should $\mathcal{R}$ be small enough to screen many groups, one can observe considerable speed-ups in practice as long as the testing can be performed efficiently. Thus a natural goal is to find safe regions as narrow as possible and only cheap computations are needed to check if $X_{g}^{\top}\mathcal{R} \subset \interior \partial \Omega_g(\beta_{g}^{\star})$.
Relying on optimality conditions, one can easily find a set containing the dual optimal solution $\hat \theta$.
For a pair of vector $(\beta, \theta) \in \dom P \times \dom D$, the duality gap is defined as the difference between the primal and dual objectives:
\begin{align*}
\Gap(\beta, \theta) = P(\beta) - D(\theta) \enspace.
\end{align*}
For such $(\beta, \theta)$ pair, weak duality holds: $P(\beta) \geq D(\theta)$, namely
$
P(\beta) - P(\hat\beta) \leq \Gap(\beta, \theta).
$
At optimal values and under mild conditions, we have the strong duality $\Gap(\hat \beta, \hat \theta) = 0$. This allows us to exploit the duality gap as an optimality certificate or as an algorithmic stopping criterion for solving problems~\eqref{eq:general_optim} and \eqref{eq:dual_problem_screening}.
For any dual feasible vector $\theta$, we have $D(\hat \theta) \geq D(\theta)$. Moreover by weak duality, for any $\beta \in \dom P$ (primal feasible), we have $P(\beta) \geq D(\hat \theta)$. Whence,
\begin{equation*}
\hat \theta \in \{ \zeta \in \dom D:\, P(\beta) \geq D(\zeta) \geq D(\theta) \} \enspace.
\end{equation*}
Nevertheless, the screening test corresponding to such set may be hard to compute explicitly.
Various shapes have been considered in practice as a safe region $\mathcal{R}$.
Here we consider \emph{sphere regions} following the terminology introduced by \cite{ElGhaoui_Viallon_Rabbani12} \ie choosing a ball $\mathcal{R} = \ball(c,r)$ as a safe region.
In that case, by positive homogeneity of the support function, we have for any direction $d\in \bbR^{|g|}$, 
\begin{align*}
\sfunc{X_{g}^{\top} \ball(c,r)}(d) &=
\sfunc{X_{g}^{\top}c}(d) + \sfunc{X_{g}^{\top}\ball(0,r)}(d) \\
&\leq \sfunc{X_{g}^{\top}c}(d) + r \sup_{\norm{u}=1}\sfunc{X_{g}^{\top}u}(d) \enspace.
\end{align*}
For any group $g$ in $\mathcal{G}$, $\hat \beta_g = \beta^{\star}_g$ when for any $d$, the following \emph{sphere test} holds
\begin{equation}
\label{eq:sphere_test}
\sfunc{X_{g}^{\top}c}(d) + r \sup_{\norm{u}=1}\sfunc{X_{g}^{\top}u}(d) < \sfunc{\partial \Omega_g(\beta_{g}^{\star})}(d) \enspace.
\end{equation}
Note that when $\Omega_g$ is a norm the sphere test reduces to
\begin{equation*}
\Omega_{g}^{\circ}(X_{g}^{\top}c) + r \Omega_{g}^{\circ}(X_g) < 1 \enspace.
\end{equation*}
where $\Omega_{g}^{\circ}$ is the norm dual to $\Omega_g$.

For the non negativity constraint, the test reduces to
\begin{equation*}
-X_{j}^{\top} c + r|X_{j}| < 0 \enspace.
\end{equation*}

\subsection{Gap safe rules}
%


Under the smoothness assumption on the loss function, it was shown by \cite{Fercoq_Gramfort_Salmon15, Raj_Olbrich_Gartner_Scholkpof_Jaggi16, Ndiaye_Fercoq_Gramfort_Salmon17} that one can rely on the duality gap to construct a safe region, whenever a dual feasible point can be constructed.
Here, we recall this construction and show how to generalize it to construct a dual feasible point for a wider class of problems~\eqref{eq:general_optim}.

\begin{proposition}
\label{eq:gap_safe_sphere}
If the dual objective $D$ is $\mu_D$-strongly concave \wrt a norm $\norm{\cdot}$, for any primal/dual feasible vectors $(\beta, \theta) \in \dom P \times \dom D$, we have:
\begin{align}
 \normin{\hat \theta - \theta}^2 
 \label{ineq:gap_safe_radius}
 &\leq \frac{2}{\mu_D} \Gap(\beta, \theta) \enspace,
\end{align}
where $(\hat\beta, \hat \theta)$ is any primal/dual optimal solution.

Thus, the ball $\ball\left(\theta, \sqrt{\frac{2}{\mu_D} \Gap(\beta, \theta)} \right)$ is a safe region (called gap safe sphere).
\end{proposition}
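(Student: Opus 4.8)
The plan is to derive the inequality directly from the definition of strong concavity applied to the dual objective $D$, using the fact that $\hat\theta$ maximizes $D$. Recall that $D$ being $\mu_D$-strongly concave \wrt $\normin{\cdot}$ means that for all $\theta, \theta' \in \dom D$ and all $s \in \partial(-D)(\theta')$ (equivalently, using supergradients of $D$), we have
\begin{align*}
D(\theta) \leq D(\theta') + \langle \nabla D(\theta'), \theta - \theta'\rangle - \frac{\mu_D}{2}\normin{\theta - \theta'}^2 \enspace,
\end{align*}
where I write $\nabla D(\theta')$ loosely for a supergradient. First I would instantiate this at $\theta' = \hat\theta$ and $\theta$ the given feasible point. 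Since $\hat\theta$ is a maximizer of $D$ over $\dom D$, the first-order optimality condition gives $\langle \nabla D(\hat\theta), \theta - \hat\theta\rangle \leq 0$ for the appropriate supergradient (this is the variational inequality for a concave maximization problem, and holds even with constraints encoded in $\dom D$ since $D$ is extended-real-valued). Plugging this in yields $D(\theta) \leq D(\hat\theta) - \frac{\mu_D}{2}\normin{\hat\theta - \theta}^2$.

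Next I would rearrange to get $\frac{\mu_D}{2}\normin{\hat\theta - \theta}^2 \leq D(\hat\theta) - D(\theta)$. The final step is to bound the right-hand side by the duality gap: by strong duality $D(\hat\theta) = P(\hat\beta) \leq P(\beta)$ for any primal feasible $\beta$ (weak duality $P(\hat\beta) = D(\hat\theta)$ together with $\hat\beta$ being a minimizer), so $D(\hat\theta) - D(\theta) \leq P(\beta) - D(\theta) = \Gap(\beta,\theta)$. Combining gives $\normin{\hat\theta - \theta}^2 \leq \frac{2}{\mu_D}\Gap(\beta,\theta)$, which is \eqref{ineq:gap_safe_radius}. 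The safe-region claim is then immediate: the inequality says $\hat\theta$ lies within distance $\sqrt{(2/\mu_D)\Gap(\beta,\theta)}$ of $\theta$ in the norm $\normin{\cdot}$, i.e. $\hat\theta \in \ball(\theta, \sqrt{(2/\mu_D)\Gap(\beta,\theta)})$, and by definition any set containing $\hat\theta$ is a safe region.

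The main subtlety — rather than a genuine obstacle — is handling the supergradient/first-order optimality argument cleanly when $D$ may be extended-valued (so $\dom D$ is a genuine constraint) and not differentiable. The careful route is to avoid invoking gradients entirely: use the characterization that $g$ is a strong-concavity ``supergradient inequality'' certificate, and note that $0$ belongs to the superdifferential of $D$ at $\hat\theta$ (Fermat's rule for the unconstrained-in-extended-sense maximization $\hat\theta \in \argmax D$). Then strong concavity in the form $D(\theta) \leq D(\hat\theta) + \langle g, \theta - \hat\theta\rangle - \frac{\mu_D}{2}\normin{\theta-\hat\theta}^2$ with $g = 0 \in \partial D(\hat\theta)$ directly gives $D(\theta) \leq D(\hat\theta) - \frac{\mu_D}{2}\normin{\hat\theta-\theta}^2$ with no separate optimality-condition step. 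I would present it this way to keep the argument self-contained. One should also remark that $\theta \in \dom D$ is needed for the inequality to be meaningful (otherwise $D(\theta) = -\infty$ and the bound is vacuous but still true), which is exactly the assumption that $\theta$ is dual feasible.
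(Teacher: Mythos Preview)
Your argument is correct and is the standard one: strong concavity of $D$ at the maximizer $\hat\theta$ (where $0$ is a supergradient by Fermat's rule) gives $D(\theta) \leq D(\hat\theta) - \tfrac{\mu_D}{2}\normin{\hat\theta-\theta}^2$, and then weak duality $D(\hat\theta) \leq P(\beta)$ turns the right-hand side into the gap. Note that the paper does not actually supply a proof of this proposition; it is stated as a known fact with references to prior work, so there is nothing to compare against beyond confirming that your derivation is the expected one. One minor clean-up: in step~3 you only need weak duality ($D(\hat\theta)\leq P(\beta)$ for any primal feasible $\beta$), not strong duality, so the invocation of $P(\hat\beta)=D(\hat\theta)$ is unnecessary.
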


We call \emph{gap safe rule} the sphere test in \Cref{eq:sphere_test} applied with the gap safe sphere in \Cref{eq:gap_safe_sphere}.
%

\paragraph{Construction of a dual feasible vector.}

To build a center for the safe sphere, we map a primal vector onto the dual space thanks to the gradient\footnote{any subgradient can be used when $f$ is not differentiable} mapping $\nabla f(\cdot)$.
However, the obtained dual vector is not necessarily feasible for the dual problem.
When the projection on the feasible set is hard, a generic procedure consists in performing a rescaling step so that it belongs to the dual set.
Precisely, we want to build $\theta \in \bbR^n$ such that
\begin{equation}\label{eq:dual_feasibility}
-\theta \in \dom f^{*} \text{ and } X^{\top}\theta \in \dom \Omega^{*} \enspace.
\end{equation}
Given a vector $\beta$ in $\bbR^p$, we build a dual point by rescaled gradient mapping  \ie
\begin{align}\label{eq:dual_feasible_point}
\theta &= \frac{-\nabla f(X\beta)}{\alpha} \\
\label{eq:generic_scaling}
\text{with}\quad \alpha &= \max\left(1 , \sfunc{\dom\Omega^{*}}^{\circ}\left(-X^{\top} \nabla f(X\beta)\right)\right) \enspace.
\end{align}
%
This choice is motivated by the primal-dual optimality link~\eqref{eq:fermat_rule}: $\hat \theta = -\nabla f(X \hat \beta)$.
Building a dual feasible vector by scaling is often used in the literature (see for instance \cite{ElGhaoui_Viallon_Rabbani12}) and reduces to residual rescaling, see \citep{Mairal,Massias_Gramfort_Salmon18} when $f$ is quadratic.


\begin{proposition}\label{prop:feasibility_dual_scaling}
If $f$ and $\Omega$ are bounded from below, then the dual vector $\theta$ in \Cref{eq:dual_feasible_point} satisfies the feasibility condition \eqref{eq:dual_feasibility}.
\end{proposition}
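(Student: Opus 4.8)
The plan is to verify the two conditions in \eqref{eq:dual_feasibility} separately, using the definition of $\theta$ in \eqref{eq:dual_feasible_point}–\eqref{eq:generic_scaling}. The key observation is that boundedness from below of $f$ and $\Omega$ translates, via conjugacy, into the statement that $0 \in \dom f^*$ and $0 \in \dom \Omega^*$; indeed $f^*(0) = \sup_x -f(x) = -\inf_x f(x) < +\infty$, and similarly for $\Omega^*$. This puts the origin inside each domain, which is exactly what makes a shrinking-toward-zero rescaling preserve feasibility.

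First I would handle the constraint $X^\top \theta \in \dom\Omega^*$. By construction $X^\top\theta = -X^\top\nabla f(X\beta)/\alpha$, where $\alpha = \max(1, \sfunc{\dom\Omega^*}^{\circ}(-X^\top\nabla f(X\beta)))$. Writing $w := -X^\top\nabla f(X\beta)$, the scaling ensures $\sfunc{\dom\Omega^*}^{\circ}(w/\alpha) = \sfunc{\dom\Omega^*}^{\circ}(w)/\alpha \le 1$. The point is then that $\dom\Omega^*$ is a closed convex set containing $0$ (using $0 \in \dom\Omega^*$ above, together with the standing assumptions making $\Omega^*$ proper, lsc, convex so its domain is convex — and one may pass to its closure if needed), so by the definition of the polar $\sfunc{\cdot}^{\circ}$ and the bipolar relationship, $\{z : \sfunc{\dom\Omega^*}^{\circ}(z) \le 1\}$ is contained in (the closure of) $\dom\Omega^*$. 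Hence $w/\alpha = X^\top\theta \in \dom\Omega^*$. I would be slightly careful here about whether one needs $\dom\Omega^*$ itself or its closure, and about the case $w = 0$, which is trivial since then $X^\top\theta = 0 \in \dom\Omega^*$.

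Next, the constraint $-\theta \in \dom f^*$, i.e. $\nabla f(X\beta)/\alpha \in \dom f^*$. Since $\alpha \ge 1$, the vector $\nabla f(X\beta)/\alpha$ is the convex combination $(1/\alpha)\nabla f(X\beta) + (1 - 1/\alpha)\cdot 0$. Now $\nabla f(X\beta) \in \dom f^*$ because $f^*(\nabla f(X\beta)) = \langle \nabla f(X\beta), X\beta\rangle - f(X\beta) < +\infty$ (this is the Fenchel equality characterizing the gradient; when $f$ is merely subdifferentiable one uses any subgradient $v \in \partial f(X\beta)$, for which $f^*(v) = \langle v, X\beta\rangle - f(X\beta)$), and $0 \in \dom f^*$ by boundedness below. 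Since $\dom f^*$ is convex, the combination lies in $\dom f^*$, giving $-\theta \in \dom f^*$.

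The main obstacle I anticipate is purely a matter of topological bookkeeping: ensuring the polar-set inclusion $\{z : \sfunc{\dom\Omega^*}^{\circ}(z)\le 1\} \subset \dom\Omega^*$ is stated at the right level of generality (closed convex sets containing the origin, possibly needing closure of $\dom\Omega^*$), and confirming that $\sfunc{\dom\Omega^*}^{\circ}$ is well-defined, which again needs $0 \in \dom\Omega^*$ — precisely the content of the lower-boundedness hypothesis. Everything else reduces to the two convexity arguments above and the Fenchel equality for (sub)gradients.
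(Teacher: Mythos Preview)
Your proposal is correct and follows essentially the same route as the paper: both arguments use boundedness from below to place $0$ in $\dom f^*$ and $\dom\Omega^*$, then invoke positive homogeneity of $\sfunc{\dom\Omega^*}^{\circ}$ together with the characterization $\sfunc{\dom\Omega^*}^{\circ}(z)\le 1 \Leftrightarrow z\in\dom\Omega^*$ for the first constraint, and the convex-combination argument (with the Fenchel--Young equality identifying $\nabla f(X\beta)\in\dom f^*$) for the second. Your extra care about closure of $\dom\Omega^*$ in the bipolar step is a point the paper simply asserts; otherwise the two proofs coincide.
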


\begin{proof}
Since $\Omega$ is bounded from below then $\Omega^{*}(0) = - \inf_{z} \Omega(z) < +\infty$ which is equivalent to $\dom \Omega^*$ contains $0$. Since it is also closed and convex, we have $\sfunc{\dom \Omega^{*}}^{\circ}$ is positively homogeneous. Hence the vector $\theta$ in \Cref{eq:dual_feasible_point}
satisfies $\sfunc{\dom\Omega^{*}}^{\circ}(X^{\top} \theta) \leq 1$ which is equivalent to $X^{\top} \theta$ in $\dom\Omega^{*}$.
Moreover, by denoting $s={1}/{\alpha} \in [0,1]$, we have $-\theta = s \nabla f(X\beta) = s \nabla f(X\beta) + (1 - s) 0$. Since $\dom f^{*}$ is convex, it remains to show that it contains the vectors $\nabla f(X\beta)$ and $0$, thus it will necessarily contains $- \theta$ by convex combination.

Since $f$ is bounded from below, we have $f^{*}(0) = - \inf_{z} f(z) < +\infty$ which is equivalent to $0 \in \dom f^*$ . Moreover, the equality case in the Fenchel-Young inequality shows that $f(X\beta) + f^*(\nabla f(X\beta))= \langle \nabla f(X\beta), X\beta\rangle < +\infty$. Hence $f^*(\nabla f(X\beta))$ is also finite.
\end{proof}

\begin{remark}\label{rk:optimal_scaling}
The scaling constant $\alpha$ in \Cref{eq:generic_scaling} is equal to 1 if $\dom\Omega^{*}$ is unbounded.
\end{remark}
\section{How long does it take to identify the optimal active set with a screening rule?}
\label{sec:time_to_identify_the_optimal_active_set_with_a_screening_rule_}
We recall the notion of converging safe regions introduced in~\cite{Fercoq_Gramfort_Salmon15} that helps to reach exact active set identification in a finite number of steps.

\begin{definition}[Converging Safe Region]\label{def:converging_region}
Let $(\mathcal{R}_{k})_{k \in \bbN}$ be a sequence of compact convex sets containing the dual optimal solution $\hat \theta$. It is a converging sequence of safe regions if the diameters of the sets converge to zero. The associated safe screening rules are referred to as converging.
\end{definition}

The following proposition asserts that after a finite number of steps, the active set is exactly identified. Such a property is sometimes referred to as finite identification of the support~\cite{Liang_Fadili_Peyre17}.

\begin{proposition} \label{prop:active_set_identification}
Let $(\mathcal{R}_{k})_{k \in \bbN}$ be a sequence of compact convex set containing $\hat \theta$ for each $k$ in $\bbN$. If $(\mathcal{R}_{k})_k$ is converging in the sense of \Cref{def:converging_region},
then it exists an integer $k_0$ such that $\mathcal{A}_{\mathcal{R}_{k}} = \mathcal{A}$ for any $k \geq k_0$.
\end{proposition}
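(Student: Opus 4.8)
The plan is to show that for each group $g$, the membership test $X_g^\top \mathcal{R}_k \subset \interior \partial \Omega_g(\beta_g^\star)$ eventually stabilizes to the exact test $X_g^\top \hat\theta \in \interior \partial \Omega_g(\beta_g^\star)$, and then use finiteness of $\mathcal{G}$ to conclude uniformly. Since the number of groups is finite, it suffices to find, for each $g \in \mathcal{G}$, an integer $k_0(g)$ after which $g \in \mathcal{A}_{\mathcal{R}_k}$ if and only if $g \in \mathcal{A}$; then $k_0 := \max_{g \in \mathcal{G}} k_0(g)$ works. Note that $\mathcal{A} \subset \mathcal{A}_{\mathcal{R}_k}$ always holds (since $\hat\theta \in \mathcal{R}_k$, if $X_g^\top\hat\theta \notin \interior\partial\Omega_g(\beta_g^\star)$ then a fortiori $X_g^\top\mathcal{R}_k \not\subset \interior\partial\Omega_g(\beta_g^\star)$), so the only thing to prove is that eventually $\mathcal{Z} \subset \mathcal{Z}_{\mathcal{R}_k}$, i.e. every non-active group is eventually screened.

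First I would fix $g \in \mathcal{Z}$, so that $X_g^\top \hat\theta \in \interior \partial \Omega_g(\beta_g^\star)$. Because this interior is open, there is $\varepsilon > 0$ with $\ball(X_g^\top\hat\theta, \varepsilon) \subset \interior \partial \Omega_g(\beta_g^\star)$. Since $X_g^\top$ is a bounded linear map, $\|X_g^\top(\theta - \hat\theta)\| \le \|X_g\|\,\|\theta - \hat\theta\|$, so whenever $\theta$ ranges over a set of diameter at most $\varepsilon / \|X_g\|$ containing $\hat\theta$, the image $X_g^\top$ of that set lies in $\ball(X_g^\top\hat\theta,\varepsilon)$. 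By \Cref{def:converging_region}, $\diam(\mathcal{R}_k) \to 0$, hence there is $k_0(g)$ such that for all $k \ge k_0(g)$, $\diam(\mathcal{R}_k) \le \varepsilon / \|X_g\|$ (if $X_g = 0$ the inclusion is trivial); for such $k$, $X_g^\top \mathcal{R}_k \subset \ball(X_g^\top\hat\theta,\varepsilon) \subset \interior \partial \Omega_g(\beta_g^\star)$, so $g \in \mathcal{Z}_{\mathcal{R}_k}$. This is exactly the ``safe screening'' condition of \Cref{prop:feature_screening}, confirming $\hat\beta_g = \beta_g^\star$ is identified at step $k$.

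Combining both inclusions gives $\mathcal{A}_{\mathcal{R}_k} = \mathcal{A}$ for $k \ge k_0 := \max_{g \in \mathcal{G}} k_0(g)$, which is finite because $\mathcal{G}$ is finite. I do not anticipate a serious obstacle here; the only mild subtlety is making sure the argument is phrased so it does not implicitly assume $X_g \neq 0$ and that it handles groups $g$ with $X_g^\top\hat\theta$ on the boundary of $\partial\Omega_g(\beta_g^\star)$ correctly — those belong to $\mathcal{A}$ and stay in $\mathcal{A}_{\mathcal{R}_k}$ for all $k$, which is consistent. One should also note that the statement uses a fixed choice of $\hat\theta$ (the dual optimal solution is unique in the relevant cases, or at least fixed throughout), so $\mathcal{A}$ is well-defined; the diameters going to zero is what forces all the $\mathcal{R}_k$ to cluster around this single point.
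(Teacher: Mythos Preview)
Your proposal is correct and structurally identical to the paper's proof: first the trivial inclusion $\mathcal{A} \subset \mathcal{A}_{\mathcal{R}_k}$ from $\hat\theta \in \mathcal{R}_k$, then for each $g \in \mathcal{Z}$ the shrinking diameter forces $X_g^\top\mathcal{R}_k$ into the open set $\interior\partial\Omega_g(\beta_g^\star)$, with finiteness of $\mathcal{G}$ giving a uniform $k_0$. The only difference is cosmetic: the paper phrases the second step via convergence of the support functions of $X_g^\top\mathcal{R}_k$ to that of $\{X_g^\top\hat\theta\}$ (in line with \Cref{lm:interior_test}), whereas you argue directly with a metric ball and the operator norm of $X_g$.
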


\begin{proof}
We proceed in two steps.

Firstly, we show that for any direction $d\in\bbR^{|g|}$, $\max_{\theta \in \mathcal{R}_k} \sfunc{X_{g}^{\top} \theta}(d)\rightarrow_k \sfunc{X_{g}^{\top} \hat \theta}(d)$. Indeed, for any $k \in \bbN$ and $\theta \in \mathcal{R}_{k}$ we have from the sublinearity and positive homogeneity of the support function, and since $\hat \theta$ in $\mathcal{R}_{k}$:
\begin{align*}
\sfunc{X_{g}^{\top} \hat \theta}(d) &\leq \max_{\theta \in \mathcal{R}_k} \sfunc{X_{g}^{\top} \theta}(d) \leq \sfunc{X_{g}^{\top}\hat \theta}(d) + \mathrm{diam}(\mathcal{R}_{k}) \sup_{\norm{u}=1}\sfunc{X_{g}^{\top}u}(d) \enspace,
\end{align*}
The conclusion follows from the fact that $\mathcal{R}_{k}$ is a converging sequence, since $\lim_{k \to \infty}\mathrm{diam}(\mathcal{R}_{k}) = 0$.

Secondly, we proceed by double inclusion.
First, remark that $\mathcal{A} = \mathcal{A}_{\mathcal{R}_{\infty}}$ where $\mathcal{R}_{\infty} :=\{\hat \theta\}$. So for all $k \in \bbN$, we have $\mathcal{A} \subseteq \mathcal{A}_{\mathcal{R}_{k}}$ since ($\mathcal{A}_{\mathcal{R}_{k}})_k$ are nested sequence of sets. Reciprocally, suppose that there exists a non active group $g \in \mathcal{G}$ \ie $\sfunc{X_{g}^{\top} \hat \theta}(d) < 1$ that remains in the active set $\mathcal{A}_{\mathcal{R}_{k}}$ for all iterations \ie  $\forall k \in \bbN, \,
\max_{\theta \in \mathcal{R}_k} \sfunc{X_{g}^{\top} \theta}(d) \geq 1$. Since $\lim_{k \rightarrow \infty} \max_{\theta \in \mathcal{R}_k} \sfunc{X_{g}^{\top} \theta}(d) = \sfunc{X_{g}^{\top} \hat \theta}(d)$, we obtain $\sfunc{X_{g}^{\top} \hat \theta}(d) \geq 1$ by passing to the limit. Hence, by contradiction, there exits an integer $k_0 \in \bbN$ such that $[p] \backslash \mathcal{A} \subseteq \mathcal{A}_{\mathcal{R}_{k}}^{c}$ for all $k \geq k_0$.
\end{proof}

One can note that the rate of identification of the active set is strongly related to the rate at which the sequence of diameters $\mathrm{diam}(\mathcal{R}_{k})$ goes to zero during the optimization process. We quantify this in the next section.

\subsection{Complexity of safe active set identification}

Dynamic safe screening rules have practical benefits since they increase the number of screened out variables as the algorithm proceeds.
The next proposition states that if one relies on a primal converging algorithm, then the dual sequence we propose is also converging.
It only requires uniqueness of $X\hat \beta$ (not that of $\hat \beta$).
In the following, $\beta_k$ is the current estimate of a primal solution $\hat \beta$ and $\theta_k = -\nabla f(X\beta_k)/ \alpha_k$, with $\alpha_k = \max(1 , \sfunc{\dom\Omega^{*}}^{\circ}(X^{\top} \nabla f(X\beta_k)))$, be the current estimate of the dual solution $\hat \theta$.

\begin{lemma}
\label{lm:dual_convergence}
It holds $\lim_{k \to \infty} X\beta_k = X\hat \beta$ implies
$\lim_{k \to \infty}\theta_k=\hat \theta$.
\end{lemma}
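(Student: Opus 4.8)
The plan is to establish the convergence $\theta_k \to \hat\theta$ by combining the continuity of the gradient mapping $\nabla f$ (near $X\hat\beta$, where $f$ is assumed continuous, hence its gradient is well behaved) with the continuity of the scaling map $\beta \mapsto \alpha_k$, and then invoking the primal-dual link $\hat\theta = -\nabla f(X\hat\beta)$ from \eqref{eq:fermat_rule}. First I would observe that, by the standing assumption, there is a primal feasible point at which $f$ is continuous, so on a neighbourhood of $X\hat\beta$ the (sub)gradient mapping is locally bounded; if $f$ is differentiable with continuous gradient the map $u \mapsto \nabla f(u)$ is continuous at $u = X\hat\beta$. Hence $\lim_k X\beta_k = X\hat\beta$ gives $\lim_k \nabla f(X\beta_k) = \nabla f(X\hat\beta)$.

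Next I would handle the scaling constant. The map $v \mapsto \sfunc{\dom\Omega^*}^{\circ}(v)$ is a support-type function of a fixed closed convex set containing $0$ (by \Cref{prop:feasibility_dual_scaling}'s argument, $\Omega$ bounded below), hence it is a lower-semicontinuous, positively homogeneous, convex function — in fact finite and continuous on $\bbR^p$ if $\dom\Omega^*$ has nonempty interior, and more generally it is continuous where finite relative to its domain. I would argue that $\alpha_k = \max(1, \sfunc{\dom\Omega^*}^{\circ}(-X^\top\nabla f(X\beta_k)))$ converges to $\alpha_\infty := \max(1, \sfunc{\dom\Omega^*}^{\circ}(-X^\top\nabla f(X\hat\beta)))$ by composing this continuity with the linear map $X^\top$ and the already-established convergence $\nabla f(X\beta_k) \to \nabla f(X\hat\beta)$. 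Crucially, since $\hat\theta = -\nabla f(X\hat\beta)$ is dual feasible, $X^\top\hat\theta \in \dom\Omega^*$, which forces $\sfunc{\dom\Omega^*}^{\circ}(-X^\top\nabla f(X\hat\beta)) \le 1$, so $\alpha_\infty = 1$. Therefore $\alpha_k \to 1$ and
\begin{equation*}
\theta_k = \frac{-\nabla f(X\beta_k)}{\alpha_k} \longrightarrow \frac{-\nabla f(X\hat\beta)}{1} = -\nabla f(X\hat\beta) = \hat\theta \enspace.
\end{equation*}

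The main obstacle I anticipate is regularity of the scaling function $\sfunc{\dom\Omega^*}^{\circ}$ at the relevant point: a polar/support function need not be continuous on the boundary of its domain, so I must make sure the argument $-X^\top\nabla f(X\hat\beta)$ is a point of continuity. The footnote signals the authors also want to cover the nondifferentiable case via an arbitrary subgradient selection, which complicates the "continuity of $\nabla f$" step; there I would instead argue along subsequences using local boundedness of $\partial f$ near $X\hat\beta$ and outer semicontinuity of the subdifferential, noting that any cluster point of $(-\nabla f(X\beta_k))$ lies in $-\partial f(X\hat\beta)$ and — being the limit of feasible rescaled points — must coincide with the (assumed unique, given $X\hat\beta$ unique) dual optimum $\hat\theta$. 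The differentiable case, which is the one actually used for gap-safe screening, avoids this subtlety entirely.
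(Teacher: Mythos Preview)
Your proposal is correct and follows essentially the same route as the paper: use the primal-dual link $\hat\theta=-\nabla f(X\hat\beta)$, pass to the limit in $\nabla f(X\beta_k)$ by continuity of $\nabla f$, and show $\alpha_k\to 1$ via continuity of the polar function together with dual feasibility $\sfunc{\dom\Omega^*}^{\circ}(X^\top\hat\theta)\le 1$, then conclude $\theta_k=-\nabla f(X\beta_k)/\alpha_k\to\hat\theta$. If anything, you are more careful than the paper about the continuity of $\sfunc{\dom\Omega^*}^{\circ}$ at the limit point and about the nondifferentiable selection issue raised in the footnote, both of which the paper's proof simply asserts or leaves implicit.
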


\begin{proof}
Let $\alpha_k = \max(1, \sfunc{\dom \Omega^*}^{\circ}(X^\top \nabla f(X\beta_k)))$,
we have:
\begin{align*}
\normin{\theta_k-\hat \theta}_2 & = \norm{\nabla f(X \hat \beta) - \frac{\nabla f(X \beta_k)}{\alpha_k}}_2\\
& \leq \left|1 -\frac{1}{\alpha_k}\right| \normin{\nabla f(X \beta_k)}_2 + \norm{\nabla f(X \hat \beta)-\nabla f(X \beta_k)}_2.
\end{align*}
If $X\beta_k \rightarrow X\hat \beta$ holds, then $\alpha_k \rightarrow
\max(1, \sfunc{\dom \Omega^*}^{\circ}(X^\top \nabla f(X\hat \beta)))=
\max(1,  \sfunc{\dom \Omega^*}^{\circ}(X^\top \hat \theta)) = 1$, since
$\nabla f(X\hat \beta) = - \hat \theta$ thanks to the optimality condition and the feasibility of $\hat \theta$:
$\sfunc{\dom \Omega^*}^{\circ}(X^\top \hat \theta)~\leq~1$. Hence the right hand side of the previous inequality converges to zero, and the conclusion holds.
\end{proof}


From \Cref{lm:dual_convergence} and by strong duality, the sequence of radius $r_k = (2 \Gap (\beta_k, \theta_k)/ \mu_D)^{1/2}$ converges to $0$ as $k$ goes to $\infty$.
Hence, the sequence of safe balls $\ball(\theta_k, r_k)$ converges to $\{\hat \theta\}$. Whence, we deduce the following property.
\begin{proposition}\label{prop:gap_safe_is_converging}
The Gap Safe rules produce converging safe regions.
\end{proposition}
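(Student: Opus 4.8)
The plan is to assemble \Cref{prop:gap_safe_is_converging} directly from the pieces already in place, and the only real work is to verify that the hypotheses chain together. The goal is to show that the sequence of gap safe spheres $\mathcal{R}_k = \ball(\theta_k, r_k)$, with $r_k = (2\,\Gap(\beta_k,\theta_k)/\mu_D)^{1/2}$, is a converging sequence of safe regions in the sense of \Cref{def:converging_region}: each $\mathcal{R}_k$ is compact, convex, contains $\hat\theta$, and $\mathrm{diam}(\mathcal{R}_k) = 2r_k \to 0$.

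First I would note that each $\mathcal{R}_k$ is a closed Euclidean ball, hence compact and convex, and that by \Cref{eq:gap_safe_sphere} it contains $\hat\theta$ provided the dual objective $D$ is $\mu_D$-strongly concave and $(\beta_k,\theta_k)$ is a primal/dual feasible pair. Feasibility of $\theta_k$ is exactly \Cref{prop:feasibility_dual_scaling} (using the boundedness-from-below hypotheses on $f$ and $\Omega$), and $\beta_k \in \dom P$ holds because $\beta_k$ is assumed to be the current iterate of an algorithm solving \eqref{eq:general_optim}. So the inclusion $\hat\theta \in \mathcal{R}_k$ and the safe-region property follow.

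Next I would establish the shrinkage $r_k \to 0$. The chain is: assume the primal algorithm satisfies $X\beta_k \to X\hat\beta$; by \Cref{lm:dual_convergence} this forces $\theta_k \to \hat\theta$; then by continuity of $P$ at $X\hat\beta$ (continuity of $f$ at $X\hat\beta$ is assumed in \Cref{sec:general_framework}, and $\Omega$ is lower semicontinuous — one should use that $P(\beta_k)\to P(\hat\beta)$ follows from $X\beta_k\to X\hat\beta$ together with a mild control on $\Omega(\beta_k)$, or simply invoke that the algorithm is converging in objective value) and continuity of $D$ at $\hat\theta$, we get $\Gap(\beta_k,\theta_k) = P(\beta_k) - D(\theta_k) \to P(\hat\beta) - D(\hat\theta) = 0$ by strong duality. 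Hence $r_k = (2\Gap(\beta_k,\theta_k)/\mu_D)^{1/2} \to 0$, so $\mathrm{diam}(\mathcal{R}_k) = 2r_k \to 0$, which is precisely \Cref{def:converging_region}.

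The main obstacle — really the only subtle point — is the convergence of the primal objective value $P(\beta_k) \to P(\hat\beta)$, or equivalently of $\Gap(\beta_k,\theta_k)\to 0$, from the mere hypothesis $X\beta_k\to X\hat\beta$. The term $f(X\beta_k)\to f(X\hat\beta)$ is fine by the assumed continuity of $f$ at $X\hat\beta$, and $D(\theta_k)\to D(\hat\theta)$ is fine once $\theta_k\to\hat\theta$ and $D$ is upper semicontinuous; but $\Omega(\beta_k)$ need not converge to $\Omega(\hat\beta)$ under $X\beta_k\to X\hat\beta$ alone unless one also assumes the algorithm is descent-type or monotone in $P$, which is the standing implicit assumption when one speaks of a "converging optimization scheme." I would therefore phrase the statement as: for any primal converging algorithm (monotone in objective, or such that $\Gap(\beta_k,\theta_k)\to 0$), the Gap Safe rules produce converging safe regions — and the proof is the three-step assembly above, with \Cref{lm:dual_convergence} supplying the dual convergence and \Cref{eq:gap_safe_sphere} supplying safety and the diameter bound.
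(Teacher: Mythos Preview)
Your proposal is correct and follows essentially the same approach as the paper: the paper's argument is the two-line remark immediately preceding the proposition, namely that \Cref{lm:dual_convergence} gives $\theta_k\to\hat\theta$, and then ``by strong duality'' $r_k\to 0$, so the gap safe balls shrink to $\{\hat\theta\}$. You have simply fleshed out the verification that each ball is compact, convex, and safe (via \Cref{eq:gap_safe_sphere} and \Cref{prop:feasibility_dual_scaling}), which the paper leaves implicit.

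In fact you are more careful than the paper on one point: the paper writes ``by strong duality'' to conclude $\Gap(\beta_k,\theta_k)\to 0$, but strong duality only gives $\Gap(\hat\beta,\hat\theta)=0$, and one still needs $P(\beta_k)\to P(\hat\beta)$. You correctly flag that $\Omega(\beta_k)\to\Omega(\hat\beta)$ does not follow from $X\beta_k\to X\hat\beta$ alone, and you resolve it by reading ``primal converging algorithm'' as convergence in objective value (which is the intended meaning in the paper's complexity section, where the sub-optimality gap $\mathcal{E}_k\to 0$ is the standing hypothesis). That is the right patch, and it makes your argument slightly tighter than the paper's own.
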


The results in \Cref{prop:active_set_identification} and \Cref{prop:gap_safe_is_converging} ensure that screening rules, applied iteratively with the duality gap based safe region, will identify the active set after a finite number of iterations.

For any safe ball $\ball(c_k, r_k)$, we have the inequalities
\begin{align*}
\sfunc{X_{g}^{\top} \hat \theta}(d) &\leq \max_{\theta \in \ball(c_k, r_k)} \sfunc{X_{g}^{\top} \theta}(d)
\leq \sfunc{X_{g}^{\top}\hat \theta}(d) + 2 r_k \sup_{\norm{u}=1}\sfunc{X_{g}^{\top}u}(d) \enspace,
\end{align*}
and the identification of the active set occurs when for all group $g$ in $\mathcal{Z}$ and any direction $d$, we have $\sfunc{X_{g}^{\top} \hat \theta}(d) < \sfunc{\partial \Omega_g(\beta_{g}^{\star})}(d)$.
The latter holds as soon as\footnote{where we implicitly avoid the trivial case where there exists some group $g$ such that $\sup_{\norm{u}=1}\sfunc{X_{g}^{\top}u}(d)=0$.}
\begin{align*}
r_k < \frac{1}{2} \min_{\underset{d \in \bd\ball(0,1)}{g \in \mathcal{Z}}}
\frac{\sfunc{\partial \Omega_g(\beta_{g}^{\star})}(d) - \sfunc{X_{g}^{\top}\hat \theta}(d)}{\sup_{\norm{u}=1}\sfunc{X_{g}^{\top}u}(d)} =: \delta_{\mathcal{Z}} \enspace.
\end{align*}
Whence, the identification of the active set using a safe ball of radius $r_k$ occurs after $k_0$ iterations where
\begin{align}\label{eq:genereic_identification_constant}
k_0 := \inf\{k \in \bbN:\, r_k < \delta_{\mathcal{Z}}\} \enspace.
\end{align}

\begin{remark}[Non-degeneracy condition]
By definition, the set $\mathcal{Z}$ is empty if $\delta_{\mathcal{Z}}$ is equal to zero. Thus all the complexity bounds are equal to infinity and then $\delta_{\mathcal{Z}} > 0$ is a necessary non-degeneracy condition to ensure finite identifications of the active set.
\end{remark}

\subsection{Duality gap certificates}

Recently, a complexity analysis of the convergence of the duality gap, used as an optimality certificate as been proposed \cite{Dunner_Forte_Takac_Jaggi16}. This analysis is important for deriving the complexity of active set identification that depends only on the rate of convergence of the algorithm. The next lemma adapts the proposed analysis to that take dual rescaling into account.

\begin{lemma}\label{lm:dunner_lemma}
Let $f$ be $\nu_f$-smooth and $\Omega$ be $\mu_{\Omega}$-strongly convex ($\mu_{\Omega} = 0$ is allowed when $\Omega^*$ is subdifferentiable on its domain \footnote{Note that in particular, the rescaled gradient mapping allows $\mu_{\Omega} = 0$ without restricting $\Omega$ to have a bounded support.}). Since the dual vector $\theta$ in \eqref{eq:dual_feasible_point} is feasible, we can choose $u \in \partial\Omega^*(X^\top \theta) \neq \emptyset$. For all $s$ in $[0,1]$, it holds
\begin{align}\label{eq:subopt_lower_bound}
P(\beta) - P(\hat \beta) \geq &s (\Gap(\beta, \theta) + \Delta(\alpha)) +
 s^2 \left[ \frac{(1-s)\mu_{\Omega}}{s}\norm{\beta - u}^2 - \frac{\nu_f}{2}\norm{X(u-\beta)}^2 \right]
\end{align}
with $\Delta(\alpha) = f^*(\nabla f(X\beta)) - f^*(-\theta) + (\alpha - 1) \langle \theta, Xu\rangle$ and the scaling $\alpha$ is defined in \Cref{eq:generic_scaling}.
\end{lemma}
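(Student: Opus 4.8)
The plan is to mimic the duality-gap convergence argument of \cite{Dunner_Forte_Takac_Jaggi16}, but carry the rescaling constant $\alpha$ explicitly through the computation. The starting point is to lower-bound $P(\hat\beta) = \min_\beta P(\beta)$ by the value of $P$ along the segment $\beta_s := \beta + s(u-\beta)$ for $s\in[0,1]$, where $u\in\partial\Omega^*(X^\top\theta)$ is the (nonempty, by feasibility of $\theta$) subgradient fixed in the statement. Concretely, $P(\hat\beta)\le P(\beta_s)$, so $P(\beta)-P(\hat\beta)\ge P(\beta)-P(\beta_s)$, and it remains to bound $P(\beta)-P(\beta_s)$ from below by the claimed quantity.

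First I would split $P = f\circ X + \Omega$ and treat the two terms separately along the segment. For the smooth term, $\nu_f$-smoothness of $f$ gives the quadratic upper bound $f(X\beta_s) \le f(X\beta) + s\langle \nabla f(X\beta), X(u-\beta)\rangle + \tfrac{\nu_f s^2}{2}\|X(u-\beta)\|^2$, which contributes the $-\tfrac{\nu_f}{2}\|X(u-\beta)\|^2$ term (with the $s^2$ factor) and a linear-in-$s$ term $-s\langle\nabla f(X\beta), X(u-\beta)\rangle$. For the regularizer, $\mu_\Omega$-strong convexity gives $\Omega(\beta_s) \le s\,\Omega(u) + (1-s)\Omega(\beta) - \tfrac{\mu_\Omega}{2}s(1-s)\|u-\beta\|^2$, contributing the $+(1-s)\mu_\Omega/s \cdot \|\beta-u\|^2$ term (again after the $s^2$ normalization that appears in the bracket) and a term $s(\Omega(u)-\Omega(\beta))$. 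Adding the two and subtracting from $P(\beta)$ leaves, at first order in $s$, the expression $s\big[\langle\nabla f(X\beta), X(u-\beta)\rangle + \Omega(\beta) - \Omega(u)\big]$ plus the second-order bracket.

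The crux is then to recognize the linear-in-$s$ coefficient as $\Gap(\beta,\theta)+\Delta(\alpha)$. Here I would use the Fenchel--Young equality cases: since $u\in\partial\Omega^*(X^\top\theta)$ we have $\Omega(u) + \Omega^*(X^\top\theta) = \langle X^\top\theta, u\rangle$, and the chain-rule identity $\langle\nabla f(X\beta), X(u-\beta)\rangle$ should be rewritten using $f^*(\nabla f(X\beta)) + f(X\beta) = \langle\nabla f(X\beta), X\beta\rangle$. Substituting and collecting, the terms $-f^*(-\theta)-\Omega^*(X^\top\theta)$ assemble into $D(\theta)$, the terms $f(X\beta)+\Omega(\beta)$ into $P(\beta)$, so we get $\Gap(\beta,\theta)=P(\beta)-D(\theta)$; the leftover pieces, namely $f^*(\nabla f(X\beta)) - f^*(-\theta)$ together with the discrepancy between $\langle\nabla f(X\beta), Xu\rangle$ and $\langle -\alpha\theta, Xu\rangle = \langle\nabla f(X\beta),Xu\rangle$ — wait, here one uses $-\theta = \nabla f(X\beta)/\alpha$ so $\nabla f(X\beta) = -\alpha\theta$, giving precisely the correction $(\alpha-1)\langle\theta, Xu\rangle$ — are exactly $\Delta(\alpha)$.

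The main obstacle I anticipate is bookkeeping the rescaling: one must be careful that $\theta$ in \eqref{eq:dual_feasible_point} equals $-\nabla f(X\beta)/\alpha$ (not $-\nabla f(X\beta)$), so every place where the optimality link $\hat\theta=-\nabla f(X\hat\beta)$ would have made a term vanish now produces a residual, and these residuals must be shown to combine cleanly into the single quantity $\Delta(\alpha)$ rather than scattered leftover terms. A secondary technical point is the case $\mu_\Omega = 0$: there the strong-convexity inequality degenerates to plain convexity of $\Omega$, the $\|\beta-u\|^2$ term drops, and one must invoke the footnoted hypothesis (subdifferentiability of $\Omega^*$ on its domain, guaranteed here by the rescaled gradient mapping) merely to ensure the fixed $u\in\partial\Omega^*(X^\top\theta)$ exists; no further change to the argument is needed.
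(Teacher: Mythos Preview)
Your plan is correct and follows exactly the paper's proof: bound $P(\beta)-P(\hat\beta)$ below by $P(\beta)-P(\beta+s(u-\beta))$, apply $\nu_f$-smoothness to $f$ and $\mu_\Omega$-strong convexity to $\Omega$ along the segment, and then rewrite the linear-in-$s$ coefficient via the Fenchel--Young equalities $\Omega(u)+\Omega^*(X^\top\theta)=\langle X^\top\theta,u\rangle$ and $f(X\beta)+f^*(\nabla f(X\beta))=\langle\nabla f(X\beta),X\beta\rangle$ together with $\nabla f(X\beta)=-\alpha\theta$ to recognize $\Gap(\beta,\theta)+\Delta(\alpha)$. One minor slip: the first-order coefficient should read $\Omega(\beta)-\Omega(u)-\langle\nabla f(X\beta),X(u-\beta)\rangle$ (your narrative drops the minus sign on the gradient term), but your subsequent algebra with the $(\alpha-1)\langle\theta,Xu\rangle$ correction is right, so this is just a typo to fix when writing it out.
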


\begin{proof}

By optimality of $\hat \beta$, for any $\beta$ and $u$ in $\dom P$, we have:
\begin{align}\label{eq:bound_1}
P(\beta) - P(\hat \beta) &\geq P(\beta) - P(\beta + s(u - \beta)) \nonumber\\
&= [\Omega(\beta) - \Omega(\beta + s (u - \beta))] + [f(X\beta) - f(X(\beta + s(u - \beta)))] \enspace.
\end{align}

By strong convexity of $\Omega$, we have:
\begin{align}\label{eq:SC_of_Omega}
\Omega(\beta) - \Omega(\beta + s (u - \beta)) \geq s (\Omega(\beta) - \Omega(u)) + \frac{s(1-s) \mu_{\Omega}}{2} \norm{u - \beta}^2 \enspace.
\end{align}

From the smoothness of $f$, we have:
\begin{align}\label{eq:SM_of_f}
f(X\beta) - f(X\beta + sX(u - \beta)) \geq s \langle \nabla f(X\beta), X(u - \beta) \rangle - \frac{s^2 \nu_f}{2}\norm{X(u - \beta)}^2 \enspace.
\end{align}

Then, plugging \Cref{eq:SC_of_Omega} and \Cref{eq:SM_of_f} to \Cref{eq:bound_1}, yields:
\begin{align*}
P(\beta) - P(\hat \beta) \geq s \Gamma + \frac{s^2}{2} \left[ \frac{(1-s)\mu_{\Omega}}{s}\norm{u - \beta}^2 - \nu_f\norm{X(u-\beta)}^2 \right]\enspace,
\end{align*}
where $\Gamma = \Omega(\beta) - \Omega(u) - \langle \nabla f(X\beta), X(u - \beta) \rangle$.

The choice of the scaling $\alpha$ in \cref{eq:generic_scaling}, we have $X^\top\theta \in \dom \Omega^*$ which implies that $\partial \Omega^*(X^\top \theta)$ is non empty.
Thus we can choose $u \in \partial \Omega^*(X^\top \theta)$ which ensure that $u \in \dom \Omega$.
Also, $f$ is smooth if and only if $f^*$ is strongly convex which implies that $\dom f$ is the whole space.
Thus $Xu \in \dom f$.
Whence $u \in \dom \Omega$ and $Xu \in \dom f$ implies $u \in \dom P$.
Let $\beta \in \dom P$, the for any $s \in [0, 1]$ on can check that $\beta + s(u - \beta) \in \dom P$.

For $u \in \partial\Omega^*(X^\top \theta)$, the equality case in the Fenchel-Young inequality reads:
$$
\Omega(u) = \langle u, X^\top \theta \rangle - \Omega^*(X^\top \theta) \enspace.
$$
Whence,
\begin{align*}
\Gamma &= \Omega(\beta) + \Omega^*(X^\top \theta) - \langle u, X^\top \theta \rangle - \langle \nabla f(X\beta), X(u - \beta) \rangle \\
&= \Gap(\beta, \theta) - f(X\beta) - f^*(-\theta) - \langle u, X^\top \theta \rangle - \langle \nabla f(X\beta), X(u - \beta) \rangle \enspace.
%
\end{align*}
From the equality case in the Fenchel-Young inequality, we have
$f(X\beta) = \langle \nabla f(X\beta), X\beta \rangle - f^*(\nabla f(X\beta))$.
Thanks to the last display and to the definition of $\theta$, we have $-f(X\beta) - f^*(-\theta) - \langle u, X^\top \theta \rangle - \langle \nabla f(X\beta), X(u - \beta) \rangle = \Delta(\alpha)$, hence the result.

\end{proof}

Let us denote the sub-optimality gap
\begin{align}\label{eq:sub-opt-gap}
\mathcal{E}_k = P(\beta_k) - P(\hat \beta), \text{ for } k \in \bbN \enspace.
\end{align}

\paragraph{Cases where $\mu_{\Omega} > 0$.}
In such a case, $\dom\Omega^*$ is the whole dual space and we can choose $\alpha = 1$ (see \Cref{rk:optimal_scaling}) whence $\Delta(\alpha)=0$.
Now choosing $s = \frac{\mu_{\Omega}}{\sigma_X\nu_{f} + \mu_{\Omega}}$ where $\sigma_X$ is the spectral norm of the design matrix $X$ (see also \cite{Dunner_Forte_Takac_Jaggi16}), then the last term in \Cref{eq:subopt_lower_bound} vanishes. Thus,
\begin{align*}
\frac{\mu_{\Omega}}{\sigma_X\nu_{f} + \mu_{\Omega}} \Gap(\beta_{k}, \theta_{k}) &\leq \mathcal{E}_{k} \leq \Gap(\beta_{k}, \theta_{k})
\enspace.
\end{align*}
This guarantees that the duality gap converges at the same rates as the sub-optimality gap.
Along with \Cref{eq:genereic_identification_constant}, we obtain the following proposition.
%
%
\begin{proposition}\label{prop:linear_complexity}
For $\mu_{\Omega}>0$ and any linearly converging primal algorithm \ie with $\mathrm{Rate}(k) = \exp(-\kappa k)$, the active set will be identified after at most $k_0$ iterations where
\begin{align*}
k_0 \leq \frac{1}{\kappa}\log\left(\frac{C_{f,\Omega, X}}{\delta_{\mathcal{Z}}^{2}} \frac{2}{\mu_D} \mathcal{E}_0\right) \enspace,
\end{align*}
for some $\kappa$ in $(0, 1]$ and the constant $C_{f,\Omega, X} := \frac{\sigma_X\nu_{f} + \mu_{\Omega}}{\mu_{\Omega}}$ depends only on the conditioning of the design matrix $X$ and on the regularity of $f$ and $\Omega$.
\end{proposition}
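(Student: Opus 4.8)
The plan is to chain together three ingredients already established in the excerpt: (i) the generic identification constant $k_0 = \inf\{k \in \bbN : r_k < \delta_{\mathcal{Z}}\}$ from~\eqref{eq:genereic_identification_constant}, where $r_k = \sqrt{2\,\Gap(\beta_k,\theta_k)/\mu_D}$ is the gap safe radius of \Cref{eq:gap_safe_sphere}; (ii) the two-sided sandwich derived from \Cref{lm:dunner_lemma} in the case $\mu_\Omega > 0$, namely $\frac{\mu_\Omega}{\sigma_X \nu_f + \mu_\Omega}\,\Gap(\beta_k,\theta_k) \leq \mathcal{E}_k \leq \Gap(\beta_k,\theta_k)$; and (iii) the assumed linear convergence of the primal algorithm, $\mathcal{E}_k \leq \mathrm{Rate}(k)\,\mathcal{E}_0 = \exp(-\kappa k)\,\mathcal{E}_0$.

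The first step is to bound $r_k^2$ from above using only $\mathcal{E}_k$. From the left inequality of the sandwich, $\Gap(\beta_k,\theta_k) \leq \frac{\sigma_X\nu_f + \mu_\Omega}{\mu_\Omega}\,\mathcal{E}_k = C_{f,\Omega,X}\,\mathcal{E}_k$, so $r_k^2 = \frac{2}{\mu_D}\Gap(\beta_k,\theta_k) \leq \frac{2}{\mu_D} C_{f,\Omega,X}\,\mathcal{E}_k$. The second step is to plug in the linear rate: $r_k^2 \leq \frac{2}{\mu_D} C_{f,\Omega,X}\,\mathcal{E}_0\, \exp(-\kappa k)$. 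The third step is to determine when this upper bound drops below $\delta_{\mathcal{Z}}^2$: it suffices that $\frac{2}{\mu_D} C_{f,\Omega,X}\,\mathcal{E}_0\, \exp(-\kappa k) < \delta_{\mathcal{Z}}^2$, i.e. $\exp(\kappa k) > \frac{C_{f,\Omega,X}}{\delta_{\mathcal{Z}}^2}\,\frac{2}{\mu_D}\,\mathcal{E}_0$, which gives $k > \frac{1}{\kappa}\log\!\big(\frac{C_{f,\Omega,X}}{\delta_{\mathcal{Z}}^2}\,\frac{2}{\mu_D}\,\mathcal{E}_0\big)$. Since $k_0$ is by definition the first integer for which $r_k < \delta_{\mathcal{Z}}$, and we have just shown $r_k < \delta_{\mathcal{Z}}$ holds for every $k$ exceeding the displayed quantity, the bound $k_0 \leq \frac{1}{\kappa}\log\!\big(\frac{C_{f,\Omega,X}}{\delta_{\mathcal{Z}}^2}\,\frac{2}{\mu_D}\,\mathcal{E}_0\big)$ follows (up to the usual ceiling, which I would absorb into the ``at most'' phrasing). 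One should also remark that $\delta_{\mathcal{Z}} > 0$ is assumed (the non-degeneracy condition), so the logarithm is well defined, and that $\kappa \in (0,1]$ is inherited from the hypothesis on $\mathrm{Rate}$.

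I do not expect any real obstacle here: the proposition is essentially a bookkeeping corollary of \Cref{eq:genereic_identification_constant}, the $\mu_\Omega>0$ discussion preceding it, and \Cref{prop:active_set_identification}/\Cref{prop:gap_safe_is_converging} which guarantee that finite identification actually happens so that $k_0$ is finite. The only point requiring a little care is making sure the chain of inequalities uses the gap \emph{upper} bound on $r_k$ (via the left half of the sandwich) rather than the lower bound, and that the quantity $\delta_{\mathcal{Z}}$ is the one from the display just before~\eqref{eq:genereic_identification_constant} so that the squared radius test matches; once that alignment is in place the computation is immediate.
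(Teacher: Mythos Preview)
Your proposal is correct and follows exactly the route the paper takes: the paper derives the sandwich $\frac{\mu_\Omega}{\sigma_X\nu_f+\mu_\Omega}\Gap(\beta_k,\theta_k)\le\mathcal{E}_k$ from \Cref{lm:dunner_lemma} with the choice $s=\mu_\Omega/(\sigma_X\nu_f+\mu_\Omega)$ and then states the proposition as a direct consequence of this bound combined with \eqref{eq:genereic_identification_constant} and the linear rate, without writing out the algebra you spell out. Your chain of inequalities $r_k^2\le \tfrac{2}{\mu_D}C_{f,\Omega,X}\mathcal{E}_0\exp(-\kappa k)<\delta_{\mathcal{Z}}^2$ is precisely the intended computation.
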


\paragraph{Case where $\mu_{\Omega} = 0$.} One possibility, here, is to modify $\Omega$ by adding a small strongly convex term (\eg smoothing).
Then, the previous result still holds for the modified problem.
However, this will slightly modify the iterates of the algorithm.
Otherwise, one can assume that $\Omega$ has a bounded support \ie $\dom \Omega$ is included in a ball of radius $L$.
In such a case, $\Omega^*$ is finite everywhere and we can still choose $\alpha = 1$ whence $\Delta(\alpha) = 0$ while having
\begin{align}\label{eq:loose_bound}
\norm{X(u_{k} - \beta_{k})} \leq 2 \sigma_{X} L \enspace.
\end{align}
Plugging it into \Cref{lm:dunner_lemma}, we obtain
\begin{align*}
\Gap(\beta_{k}, \theta_{k}) &\leq \frac{1}{s} \mathcal{E}_{k} + 2\nu_f \sigma_{X}^{2} L^2 s \enspace.
\end{align*}
Minimizing the upper bound in $s$ onto $(0, 1]$, we have
\begin{align*}
\Gap(\beta_{k}, \theta_{k}) \leq \sqrt{8 \nu_f \sigma_{X}^{2} L^2 \mathcal{E}_{k}} \enspace.
\end{align*}
When the optimization algorithm converges linearly, the complexity in Prop.~\ref{prop:linear_complexity} is preserved up to some constants because the logarithmic term is not affected by the square-root.
But, it leads to a suboptimal bound in the sub-linear regime,
\begin{proposition}\label{eq:gap_rate_sublinear_suboptimal}
For $\mu_{\Omega}=0$ and any sub-linearly primal convergent algorithm \ie with $\mathrm{Rate}(k) = C/k^\gamma$ where $\gamma>0$, the active set will be identified after at most $k_0$ iterations where
\begin{align*}
k_0 \leq \left( \frac{8 \nu_f \sigma_{X}^{2}L^2C}{(\mu_{D} \delta_{\mathcal{Z}}^{2})^2} \right)^{\tfrac{1}{\gamma}} \enspace.
\end{align*}
\end{proposition}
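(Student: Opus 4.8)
The plan is to combine the sub-linear rate hypothesis with the duality-gap bound derived just above the proposition and then invert the resulting inequality to extract a bound on $k_0$. First I would recall that, in the regime $\mu_\Omega = 0$ with bounded support, \Cref{lm:dunner_lemma} together with the optimization $\min_{s\in(0,1]}$ argument gives the square-root gap estimate $\Gap(\beta_k,\theta_k) \leq \sqrt{8\nu_f \sigma_X^2 L^2 \,\mathcal{E}_k}$. Substituting the assumed primal rate $\mathcal{E}_k = P(\beta_k) - P(\hat\beta) \leq C/k^\gamma$ (absorbing the initial sub-optimality into $C$) yields $\Gap(\beta_k,\theta_k) \leq \sqrt{8\nu_f\sigma_X^2 L^2 C}\, k^{-\gamma/2}$.

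Next I would translate this into a bound on the gap safe radius. By \Cref{eq:gap_safe_sphere} the radius satisfies $r_k = \sqrt{(2/\mu_D)\Gap(\beta_k,\theta_k)} \leq \bigl(8\nu_f\sigma_X^2 L^2 C\bigr)^{1/4}\sqrt{2/\mu_D}\; k^{-\gamma/4}$. Then I would invoke \Cref{eq:genereic_identification_constant}: the active set is identified as soon as $r_k < \delta_{\mathcal{Z}}$. It therefore suffices to find the smallest $k$ for which the displayed upper bound on $r_k$ drops below $\delta_{\mathcal{Z}}$, i.e. solve $\bigl(8\nu_f\sigma_X^2 L^2 C\bigr)^{1/4}\sqrt{2/\mu_D}\, k^{-\gamma/4} < \delta_{\mathcal{Z}}$ for $k$.

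Raising both sides to the power $4/\gamma$ and rearranging gives exactly
$k > \bigl(8\nu_f\sigma_X^2 L^2 C\bigr)^{1/\gamma}\bigl(2/\mu_D\bigr)^{2/\gamma}\delta_{\mathcal{Z}}^{-4/\gamma} = \bigl(8\nu_f\sigma_X^2 L^2 C / (\mu_D\delta_{\mathcal{Z}}^2)^2\bigr)^{1/\gamma}$, which is the claimed value of $k_0$; since $r_k$ is non-increasing along a convergent run (or, more carefully, since once the bound is below $\delta_{\mathcal{Z}}$ it stays below because $k^{-\gamma}$ is decreasing), identification persists for all larger $k$, and \Cref{prop:active_set_identification} guarantees $\mathcal{A}_{\mathcal{R}_k} = \mathcal{A}$ from then on.

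I expect the only real subtlety to be bookkeeping rather than a genuine obstacle: one must be careful that the constant $C$ in the statement is understood to already incorporate whatever factor relates $\mathcal{E}_k$ to $\mathrm{Rate}(k)$, and that the non-degeneracy condition $\delta_{\mathcal{Z}} > 0$ is in force so the bound is finite. A minor point worth checking is monotonicity of the radius bound in $k$ — this is immediate here since $k \mapsto k^{-\gamma/4}$ is decreasing, so no separate argument that $r_k$ itself is monotone is needed. Everything else is the routine algebraic inversion sketched above.
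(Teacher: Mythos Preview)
Your proposal is correct and follows exactly the route the paper takes: chain the square-root gap estimate $\Gap(\beta_k,\theta_k)\leq\sqrt{8\nu_f\sigma_X^2L^2\,\mathcal{E}_k}$, the gap-safe radius formula, the rate $\mathcal{E}_k\leq C/k^\gamma$, and the threshold criterion $r_k<\delta_{\mathcal{Z}}$ from \Cref{eq:genereic_identification_constant}, then invert. The only slip is in the final simplification: $(2/\mu_D)^{2/\gamma}$ contributes a factor $4/\mu_D^2$, so your intermediate expression actually equals $\bigl(32\,\nu_f\sigma_X^2L^2C/(\mu_D\delta_{\mathcal{Z}}^2)^2\bigr)^{1/\gamma}$, not the stated constant $8$; this is a harmless constant discrepancy (present already in the paper's statement) and does not affect the argument.
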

%
To exactly match the rate of the algorithm (\ie to remove the squared term \Cref{eq:gap_rate_sublinear_suboptimal}) we propose to additionally assume Lipschitz continuity of the sub-differential $\partial \Omega^*$ and a quadratic error bound on the objective function $P$.
More precisely, we suppose that there exists some constants $L_*$ and $\gamma_P > 0$  such that for a selection of $u$ in $\partial \Omega^*(\zeta)$ and $\hat u$ in $\partial \Omega^*(\hat \zeta)$, we have\footnote{This condition is required only for $\hat \zeta = -X^\top \nabla f(X\hat \beta)$ and when $\zeta = \hat \zeta$, the choice restricts to $u = \hat u = \hat \beta$, which can be ensured by selecting $u$ as the projection of $\beta$ onto $\partial \Omega(X^\top\theta)$.}:
\begin{align}
\norm{u - \hat u} &\leq L_* \normin{\zeta - \hat \zeta} \label{eq:lipschitz_subdifferential} \\
\frac{\gamma_P}{2} \normin{\beta - \hat \beta}^2 &\leq P(\beta) - P(\hat \beta) \enspace. \label{eq:error_bound}
\end{align}

The reason is that $u_{k}$ is expected to converge to $\hat\beta$ and so the bound in \Cref{eq:loose_bound} may be too crude. In a sub-linear regime, the following lemma shows that assumptions~\eqref{eq:lipschitz_subdifferential} and \eqref{eq:error_bound} are sufficient conditions to improve the previous analysis in \cite{Dunner_Forte_Takac_Jaggi16}.

\begin{remark}
The quadratic error bound condition in \Cref{eq:error_bound} was proven to be satisfied for a large class of optimization problem. One can refer to \cite{Bolte_Nguyen_Peypouquet_Suter16} where it was used to analyze the complexity of first order optimization methods. Similar Lipschitz continuity assumptions on the subdifferential in \Cref{eq:lipschitz_subdifferential} were made in \cite{Jourani_Thibault_Zagrodny12}, see also \citep[Chapter 9.E]{Rockafellar_Wets09}. However, it is not straightforward to explicitly compute these constants for practical applications.
\end{remark}

\begin{lemma}\label{eq:improved_gap_bound}
Under assumptions~\eqref{eq:lipschitz_subdifferential} and \eqref{eq:error_bound}, for any integer $k$, it holds
\begin{align*}
\Gap(\beta_{k}, \theta_{k}) &\leq \sqrt{2\nu_f C'_{f, \Omega, X}} \; \mathcal{E}_{k} \enspace,
\end{align*}
where $C'_{f, \Omega, X} = \frac{4 \sigma_{X}^{2}(L_{*}^{2} \sigma_{X}^{2}\nu_{f}^{2} + 1)}{\gamma_P}$ is non negative and finite.
\end{lemma}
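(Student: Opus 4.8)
The plan is to start from Lemma~\ref{lm:dunner_lemma} specialized to the case $\mu_\Omega = 0$, which gives
\begin{align*}
\Gap(\beta_k, \theta_k) + \Delta(\alpha_k) \leq \frac{1}{s}\mathcal{E}_k + \frac{s\nu_f}{2}\norm{X(u_k - \beta_k)}^2
\end{align*}
for all $s \in (0,1]$, after dropping the now-vanishing strong-convexity term and rearranging. Since we also want $\Delta(\alpha_k)$ to disappear, I would invoke the same reasoning used just before Prop.~\ref{prop:linear_complexity}: either $\dom\Omega^*$ is the whole space or $\Omega$ has bounded support, so the scaling is $\alpha_k = 1$ and hence $\Delta(\alpha_k) = 0$. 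The novelty over the bounded-support argument leading to \eqref{eq:loose_bound} is to replace the crude estimate $\norm{X(u_k - \beta_k)} \leq 2\sigma_X L$ by a sharper bound that uses the fact that both $u_k \to \hat\beta$ and $\beta_k \to \hat\beta$.

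The key step is thus to bound $\norm{X(u_k - \beta_k)}$ in terms of $\sqrt{\mathcal{E}_k}$. By the triangle inequality, $\norm{X(u_k - \beta_k)} \leq \sigma_X(\norm{u_k - \hat\beta} + \norm{\beta_k - \hat\beta})$. For the second term, the quadratic error bound \eqref{eq:error_bound} gives $\norm{\beta_k - \hat\beta} \leq \sqrt{2\mathcal{E}_k/\gamma_P}$. For the first term, I apply the Lipschitz-subdifferential assumption \eqref{eq:lipschitz_subdifferential} with $\zeta = X^\top\theta_k$ and $\hat\zeta = X^\top\hat\theta = -X^\top\nabla f(X\hat\beta)$ (using the optimality link and, as the footnote to \eqref{eq:lipschitz_subdifferential} indicates, the selection $\hat u = \hat\beta$): this yields $\norm{u_k - \hat\beta} \leq L_* \norm{X^\top(\theta_k - \hat\theta)} \leq L_*\sigma_X\norm{\theta_k - \hat\theta}$. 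Then I would control $\norm{\theta_k - \hat\theta}$ via the gap-safe radius bound \eqref{ineq:gap_safe_radius}, $\norm{\theta_k - \hat\theta}^2 \leq (2/\mu_D)\Gap(\beta_k,\theta_k)$; however this reintroduces the gap, so I would instead use the smoothness of $f$ directly to write $\norm{\theta_k - \hat\theta} = \norm{\nabla f(X\beta_k) - \nabla f(X\hat\beta)} \leq \nu_f\norm{X(\beta_k - \hat\beta)} \leq \nu_f\sigma_X\norm{\beta_k - \hat\beta}$ (this is where $\alpha_k = 1$ is again used, so that $\theta_k = -\nabla f(X\beta_k)$). Combining, $\norm{u_k - \hat\beta} \leq L_*\sigma_X^2\nu_f\norm{\beta_k - \hat\beta} \leq L_*\sigma_X^2\nu_f\sqrt{2\mathcal{E}_k/\gamma_P}$.

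Putting these together, $\norm{X(u_k - \beta_k)} \leq \sigma_X(L_*\sigma_X^2\nu_f + 1)\sqrt{2\mathcal{E}_k/\gamma_P}$, so $\norm{X(u_k - \beta_k)}^2 \leq \frac{2\sigma_X^2(L_*\sigma_X^2\nu_f + 1)^2}{\gamma_P}\mathcal{E}_k$. Bounding $(L_*\sigma_X^2\nu_f + 1)^2 \leq 2(L_*^2\sigma_X^4\nu_f^2 + 1)$ matches the stated constant $C'_{f,\Omega,X} = \frac{4\sigma_X^2(L_*^2\sigma_X^2\nu_f^2 + 1)}{\gamma_P}$ up to the factor $\sigma_X^2$ absorbed appropriately. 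Finally, plugging into the specialized Lemma~\ref{lm:dunner_lemma} bound and optimizing over $s$: we have $\Gap(\beta_k,\theta_k) \leq \frac{1}{s}\mathcal{E}_k + \frac{s\nu_f}{2}\cdot C'_{f,\Omega,X}\mathcal{E}_k/(\text{appropriate factor})$; choosing $s$ to balance the two terms (the classical $s = \sqrt{a/b}$ trick for $a/s + bs$, clipped to $(0,1]$) gives $\Gap(\beta_k,\theta_k) \leq \sqrt{2\nu_f C'_{f,\Omega,X}}\,\mathcal{E}_k$. The main obstacle I anticipate is bookkeeping the constants and powers of $\sigma_X$ consistently so that they collapse exactly into $C'_{f,\Omega,X}$, together with justifying that the optimal $s$ indeed lands in $(0,1]$ (or handling the clipped case, where $\mathcal{E}_k$ small enough forces $s^* \leq 1$ eventually, which is all that is needed for the identification-complexity corollary). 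The assumption $\alpha_k = 1$ must be flagged explicitly, since the general rescaled mapping would leave a residual $\Delta(\alpha_k)$ term that this clean bound does not accommodate.
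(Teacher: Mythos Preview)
Your proposal is correct and follows essentially the same route as the paper: specialize Lemma~\ref{lm:dunner_lemma} with $\mu_\Omega=0$ and $\alpha_k=1$, bound $\norm{X(u_k-\beta_k)}^2$ by splitting through $\hat\beta$ and chaining assumption~\eqref{eq:lipschitz_subdifferential}, the $\nu_f$-Lipschitz gradient of $f$, and the error bound~\eqref{eq:error_bound}, then minimize in $s$. The paper squares first and uses $\norm{X(u_k-\beta_k)}^2 \leq 2\sigma_X^2(\norm{u_k-\hat\beta}^2+\norm{\beta_k-\hat\beta}^2)$ directly, whereas you apply the triangle inequality and square afterwards; these are equivalent up to the same $(a+b)^2\leq 2(a^2+b^2)$ step. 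Your concern about the $\sigma_X$ bookkeeping is justified: your chain correctly picks up $\sigma_X^2$ in $\norm{u_k-\hat\beta}\leq L_*\sigma_X^2\nu_f\norm{\beta_k-\hat\beta}$ (one $\sigma_X$ from $X^\top$ and one from $\norm{X(\beta_k-\hat\beta)}$), while the paper records only $\sigma_X$ there; this is a harmless constant discrepancy in the paper, not in your argument.
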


\begin{proof}
First note that
\begin{align*}
\norm{X(u_{k} - \beta_{k})}^2 \leq 2 \sigma_{X}^{2}(\normin{u_{k} - \hat\beta}^2 + \normin{\beta_{k} - \hat\beta}^2) \enspace.
\end{align*}
When $\alpha=1$, we have $u_{k} \in \partial \Omega^*(-X^\top \nabla f(\beta_{k}))$.
Moreover, $\hat \beta \in \partial \Omega^*(-X^\top \nabla f(\hat\beta))$ and we have:
\begin{align*}
\normin{u_{k} - \hat\beta} &\leq L_* \normin{X^\top \nabla f(X\beta_{k}) - X^\top \nabla f(X\hat\beta)} \\
&\leq L_* \sigma_X \nu_f\normin{\beta_{k} - \hat\beta} \\
&\leq L_* \sigma_X \nu_f \sqrt{\frac{2}{\gamma_P} \mathcal{E}_{k}} \enspace,
\end{align*}
where the first inequality results from the assumption~\eqref{eq:lipschitz_subdifferential}, the second from the smoothness of $f$ ($f$ is $\nu_f$ smooth so the gradient is $\nu_f$ Lipschitz), and the third from the quadratic error bound~\Cref{eq:error_bound}.

Thus, for $C'_{f, \Omega, X} = \frac{4 \sigma_{X}^{2}(L_{*}^{2} \sigma_{X}^{2}\nu_{f}^{2} + 1)}{\gamma_P}$, we have
\begin{align*}
\norm{X(u_{k} - \beta_{k})}^2 \leq C'_{f, \Omega, X} \mathcal{E}_{k} \enspace.
\end{align*}
Plugging it into \Cref{lm:dunner_lemma}, we obtain
\begin{align*}
\mathcal{E}_{k} \geq P(\beta_k) - P(\hat \beta) \geq &s \Gap(\beta, \theta) -
s^2 \frac{\nu_f C'_{f, \Omega, X}}{2} \mathcal{E}_{k} \enspace.
\end{align*}
Whence
\begin{align*}
\Gap(\beta_{k}, \theta_{k}) &\leq \left(\frac{1}{s} + s \frac{\nu_f C'_{f, \Omega, X}}{2} \right) \mathcal{E}_{k} \enspace.
\end{align*}
Minimizing the upper bound in $s$ onto $(0, 1]$, we obtain the result.
\end{proof}


%
Then, in a sub-linear regime, we recover the exact rate.
\begin{proposition}
For $\mu_{\Omega}=0$, under assumptions~\eqref{eq:lipschitz_subdifferential} \eqref{eq:error_bound} and any sub-linearly primal convergent algorithm \ie with $\mathrm{Rate}(k) = C/R^{\gamma}$ where $\gamma>0$, the active set will be identified after at most $k_0$ iterations where
\begin{align*}
k_0 \leq \left( \frac{\sqrt{8 \nu_{f}C'_{f, \Omega, X}} C}{\mu_{D} \delta_{\mathcal{Z}}^{2}} \right)^{\tfrac{1}{\gamma}} \enspace.
\end{align*}
\end{proposition}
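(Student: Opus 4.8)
The plan is to combine the improved duality gap bound from \Cref{eq:improved_gap_bound} with the gap safe sphere radius and the identification constant $k_0$ defined in \Cref{eq:genereic_identification_constant}. First I would recall that the gap safe sphere has radius $r_k = \sqrt{2\Gap(\beta_k,\theta_k)/\mu_D}$ (from \Cref{eq:gap_safe_sphere}, since the dual objective $D$ is $\mu_D$-strongly concave), and that by \Cref{eq:genereic_identification_constant} the optimal active set is identified as soon as $r_k < \delta_{\mathcal{Z}}$, i.e. as soon as $\Gap(\beta_k,\theta_k) < \mu_D \delta_{\mathcal{Z}}^2 / 2$.

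Next I would substitute the bound of \Cref{eq:improved_gap_bound}, namely $\Gap(\beta_k,\theta_k) \leq \sqrt{2\nu_f C'_{f,\Omega,X}}\,\mathcal{E}_k$, so that it suffices to have $\sqrt{2\nu_f C'_{f,\Omega,X}}\,\mathcal{E}_k < \mu_D \delta_{\mathcal{Z}}^2/2$. Then I would use the assumed sub-linear convergence rate $\mathcal{E}_k \leq \mathcal{E}_0\,\mathrm{Rate}(k) = \mathcal{E}_0\, C/k^{\gamma}$ (absorbing $\mathcal{E}_0$ into $C$ as the statement implicitly does, writing the rate directly as $C/k^\gamma$). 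This reduces the condition to $\sqrt{2\nu_f C'_{f,\Omega,X}}\, C/k^{\gamma} < \mu_D\delta_{\mathcal{Z}}^2/2$, which after isolating $k$ gives
\begin{align*}
k > \left(\frac{2\sqrt{2\nu_f C'_{f,\Omega,X}}\,C}{\mu_D \delta_{\mathcal{Z}}^2}\right)^{1/\gamma} = \left(\frac{\sqrt{8\nu_f C'_{f,\Omega,X}}\,C}{\mu_D\delta_{\mathcal{Z}}^2}\right)^{1/\gamma} \enspace,
\end{align*}
which is exactly the claimed bound on $k_0$ once one takes the infimum over all such $k$ as in \Cref{eq:genereic_identification_constant}.

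The only genuine subtleties, rather than obstacles, are bookkeeping ones: making sure the factor $2$ from $r_k^2 = 2\Gap/\mu_D$ and the factor $2$ inside $\sqrt{8\cdot} = 2\sqrt{2\cdot}$ are tracked correctly so that the square-root dependence on $\mathcal{E}_k$ (not $\mathcal{E}_k^2$, unlike the suboptimal \Cref{eq:gap_rate_sublinear_suboptimal}) produces a $1/\gamma$ exponent rather than $1/(2\gamma)$; and being explicit that \Cref{eq:improved_gap_bound} requires the scaling $\alpha=1$, which holds in the $\mu_\Omega = 0$ regime under the stated boundedness or subdifferentiability hypotheses already invoked for that lemma. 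One should also note the non-degeneracy condition $\delta_{\mathcal{Z}} > 0$ is implicitly assumed, as flagged in the earlier remark, so that the right-hand side is finite. Beyond that the argument is a direct chaining of \Cref{eq:gap_safe_sphere}, \Cref{eq:improved_gap_bound}, \Cref{eq:genereic_identification_constant} and the rate hypothesis, with no step that I expect to present real difficulty.
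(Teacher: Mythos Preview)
Your proposal is correct and matches the paper's approach: the proposition is stated without an explicit proof in the paper, but the intended argument is exactly the chaining you describe --- combine the gap safe radius $r_k^2 = \tfrac{2}{\mu_D}\Gap(\beta_k,\theta_k)$, the improved bound $\Gap(\beta_k,\theta_k)\le \sqrt{2\nu_f C'_{f,\Omega,X}}\,\mathcal{E}_k$ from \Cref{eq:improved_gap_bound}, the identification criterion $r_k<\delta_{\mathcal Z}$ from \Cref{eq:genereic_identification_constant}, and the sub-linear rate $\mathcal{E}_k\le C/k^\gamma$. Your constant tracking is right, and your side remarks about $\alpha=1$ and $\delta_{\mathcal Z}>0$ are appropriate; the only quibble is that your aside contrasting a ``$1/\gamma$'' versus ``$1/(2\gamma)$'' exponent is not quite the right way to phrase the improvement over \Cref{eq:gap_rate_sublinear_suboptimal} (both bounds carry exponent $1/\gamma$; the gain is that $(\mu_D\delta_{\mathcal Z}^2)^2$ is replaced by $\mu_D\delta_{\mathcal Z}^2$), but this does not affect the proof itself.
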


Finally, when the domain of $\Omega$ is not bounded, the algorithm can be equipped with a modified duality gap which enforces the bounded domain assumption (this is known as the \emph{Lipschitzing Trick} \cite{Dunner_Forte_Takac_Jaggi16} in the litterature).
Then, the previous result still holds without modifying the iterates of the algorithm.


\paragraph{Related works.}
To our knowledge, this paper is the first one to discuss the complexity of active set identification with screening rules.
Our results match the existing results on active set identification in \cite{Liang_Fadili_Peyre17, Nutini_Laradji_Schmidt17, Sun_Jeong_Nutini_Schmidt19} for proximal algorithms.
Interestingly, our result uniformly holds for any converging algorithm not only proximal methods and illustrates the benefits obtained as screening rules explicitly and definitely eliminate non-active variables along the algorithmic progress.


\section{Acceleration Strategies}
\label{sec:Acceleration_Strategies}

We discuss some practical methods for efficiently using screening rules to speed up optimization processes for solving \Cref{eq:general_optim} and show how some popular previous acceleration heuristics such as \emph{strong rules} \cite{Tibshirani_Bien_Friedman_Hastie_Simon_Tibshirani12} or recent \emph{working sets} \cite{Johnson_Guestrin15, Massias_Gramfort_Salmon18} can be extended in our framework.

\paragraph{Static (Pre-processing).}
A natural strategy is to set, once for all, a gap safe radius using some initial fixed vectors $\theta = \theta_0$ and $\beta = \beta_0$. The resulting static safe region $\ball\left(\theta_0, \sqrt{ \frac{2}{\mu_D} \Gap(\beta_0, \theta_0) } \right)$ is used in \Cref{eq:sphere_test}.
Such a strategy is only efficient when $(\beta_0, \theta_0)$ are good enough estimate of the optimal solutions, and have limited scope in practice.

\paragraph{Dynamic.} One could rather use the information gained during an optimization process to obtain a smaller safe region therefore a greater elimination of inactive variables. Whence, we consider
$
\ball\left(\theta_k, \sqrt{ \frac{2}{\mu_D} \Gap(\beta_k, \theta_k } \right).
$
Dynamic safe region was initially suggested in \cite{Bonnefoy_Emiya_Ralaivola_Gribonval14} and further used in the duality gap based region in  \cite{Fercoq_Gramfort_Salmon15, Shibagaki_Karasuyama_Hatano_Takeuchi16, Ndiaye_Fercoq_Gramfort_Salmon17,LeMorvan_Vert18}.

\paragraph{Sequential (Homotopy Continuation).}

Sequential screening is motivated by the intuition that, often, the duality gap grows continuously \wrt to the regularization parameter \cite{Giesen_Laue_Mueller_Swiercy12, Ndiaye_Le_Fercoq_Salmon_Takeuchi2019}.
%
It basically states that when $\lambda$ close to $\lambda_t$, the duality gap $\Gap_{\lambda}(\beta^{(\lambda_t)}, \theta^{(\lambda_t)})$ tends to $\Gap_{\lambda_t}(\beta^{(\lambda_t)}, \theta^{(\lambda_t)})$.
As a by product, given a sufficiently fine grid of parameter $(\lambda_t)_{t \in [T]}$, the sequential screenings based on balls $\ball\left(\theta^{(\lambda_t)}, \sqrt{ \frac{2}{\mu_D} \Gap_{\lambda_{t-1}}(\beta^{(\lambda_t)}, \theta^{(\lambda_t)})} \right)$, will be small enough to efficiently remove non active variables.

\paragraph{Active Warm Start (aka strong rules).}
This method was introduced in \cite{Tibshirani_Bien_Friedman_Hastie_Simon_Tibshirani12} as a heuristic relaxation of the safe rules to discard features more aggressively in $\ell_1$ regularized optimization problem. We generalize it into our framework.
Let $F(\beta) = f(X\beta)$. We havefor any $d \in \bbR^{|g|}$
\begin{align*}
\sfunc{X_{g}^{\top}\ttheta{\lambda}}(d) &= \sfunc{X_{g}^{\top}\ttheta{\lambda'}}(d) + \sfunc{\{X_{g}^{\top}\ttheta{\lambda} - X_{g}^{\top}\ttheta{\lambda'}\}}(d) \\
&= \sfunc{X_{g}^{\top}\ttheta{\lambda'}}(d) + \sfunc{\{\nabla_g F(\tbeta{\lambda'}) - \nabla_g F(\tbeta{\lambda})\}}(d) \enspace.
\end{align*}
If $\nabla F$ is group-wise non-expansive along the regularization path \ie
$\normin{\nabla_g F(\tbeta{\lambda'}) - \nabla_g F(\tbeta{\lambda})} \leq |\lambda' - \lambda|$,
the screening
 holds whenever the
\emph{(generalized) strong rule} holds:
\begin{align*}
\sfunc{X_{g}^{\top}\ttheta{\lambda'}}(d) + |\lambda' - \lambda| < \sfunc{\partial \Omega_g(\beta_{g}^{\star})}(d) \enspace.
\end{align*}
The strong rules are un-safe because the non-expansiveness condition on $\nabla F$ is usually not satisfied without stronger assumptions on the design matrix $X$ (\eg $X$ has full column rank and $(X^\top X)^{-1}$ is diagonally dominant). Moreover, the \emph{exact} solution $\ttheta{\lambda'}$ is usually not available.\\
As a simpler rule, specially when the previous regularity condition cannot be verified, we rather suggest to use the previous active set
\begin{align*}
    \sfunc{X_{g}^{\top}\ttheta{\lambda'}}(d) < \sfunc{\partial \Omega_g(\beta_{g}^{\star})}(d) \enspace.
\end{align*}
%

%
%
The rational behind these heuristics is that, often, the active set is stable along the regularization path, a crucial argument used to build the \texttt{Lars} algorithm \cite{Efron_Hastie_Johnstone_Tibshirani04} and variants \cite{Osborne_Presnell_Turlach00b}.

\paragraph{Aggressive Active Warm Start.}

The gap safe screening rule relies on an upper estimates the suboptimal gap by the duality gap $\Gap(\beta_k, \theta_k)$. This can be conservative for the screening rules since no false elimination is allowed. Here we suggest a new heuristic in order to remove more variables at an early stage of an optimization process. At any iteration $k$, use $\mathcal{E}_{k}^{\approx} = |P(\beta_{k - s}) - P(\beta_k)|$ as an unsafe estimate of the suboptimal gap.
This will eliminate more variables depending on the choice of the delaying parameter $s$. In practice, we delay $\beta_k$ and $\beta_{k-s}$ with $10$ epochs for instance for the Lasso case, when using coordinate descent as a solver. To avoid a severe underestimation, one can instead use $(1 - \eta) \mathcal{E}_{k}^{\approx} + \eta \Gap(\beta_k, \theta_k)$. We set a default value $\eta=10^{-3}$.\\
See the numerical illustrations in \Cref{fig:rate_aggressive} and appendix.
%
\begin{remark}
Since these rules are unsafe \ie they can wrongly remove some variables, they must be accompanied with a post-precessing step. For instance by adding back the variables that violates the KKT conditions. We rather suggest to use the solution obtained in these steps as a warm start for the dynamic safe rules with a converging algorithm. In this way, a low computational complexity can be maintained when passing over the entire problem with a better initialization of gap safe screening rules.
\end{remark}

\begin{figure}[!t]
  \centering
  \subfigure[Convergence rate.]{\includegraphics[width=0.48\linewidth, keepaspectratio]{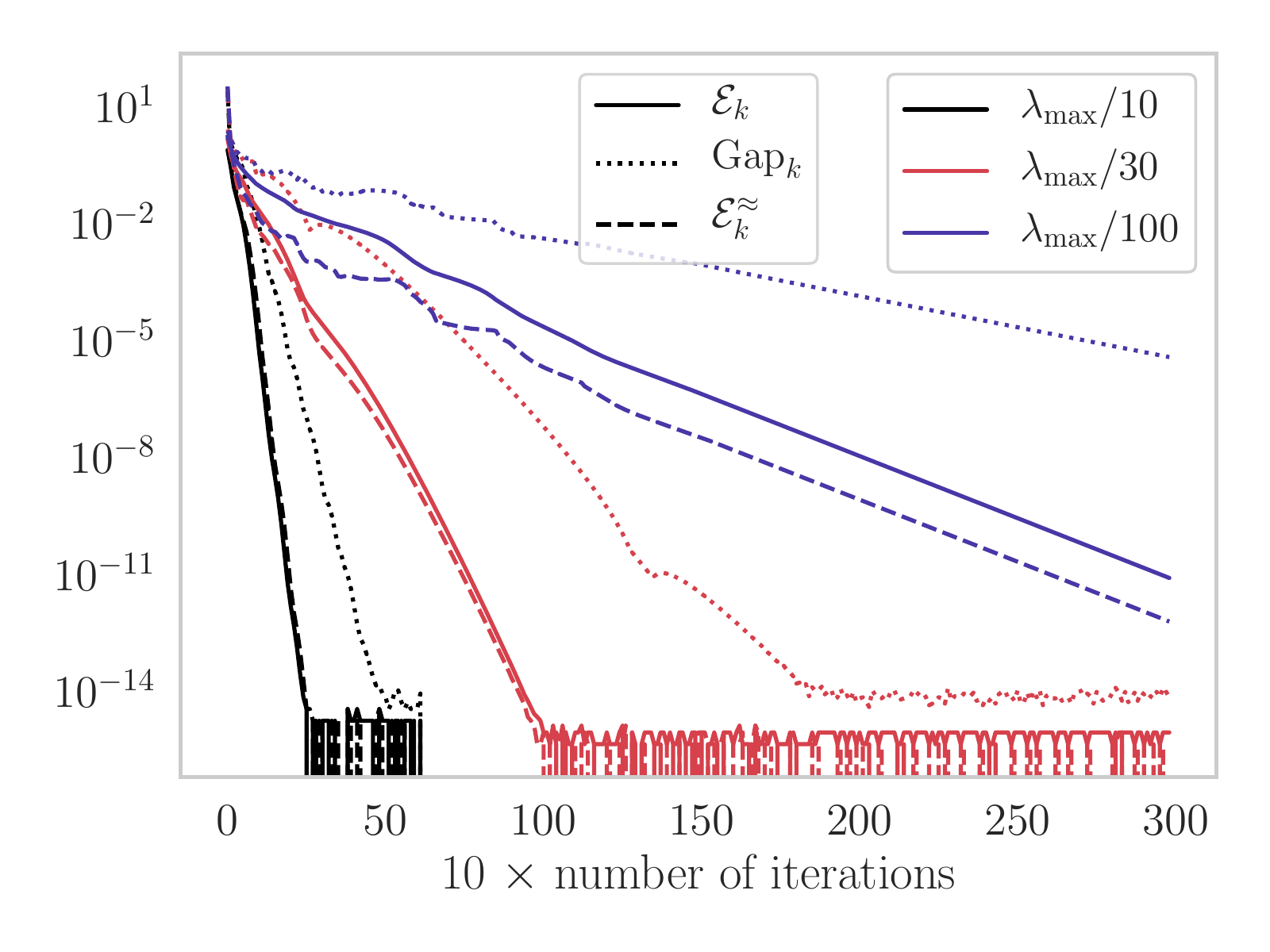}}
  %
  %
  \subfigure[Times for $\lambda=\lambda_{\max}/10$.]{\includegraphics[width=0.48\linewidth, keepaspectratio]{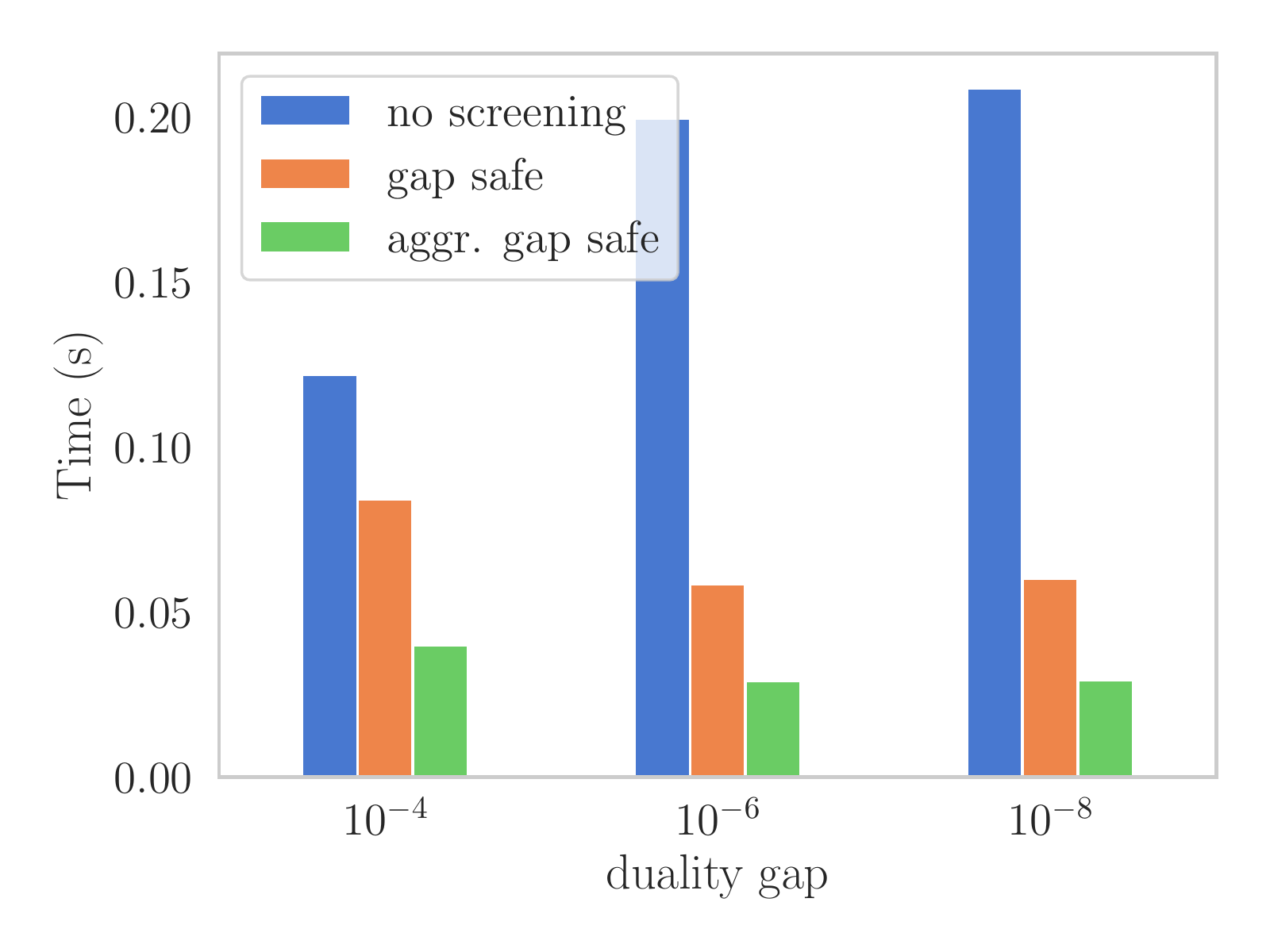}}
  %
  %
  \subfigure[Times for $\lambda=\lambda_{\max}/30$.]{\includegraphics[width=0.48\linewidth, keepaspectratio]{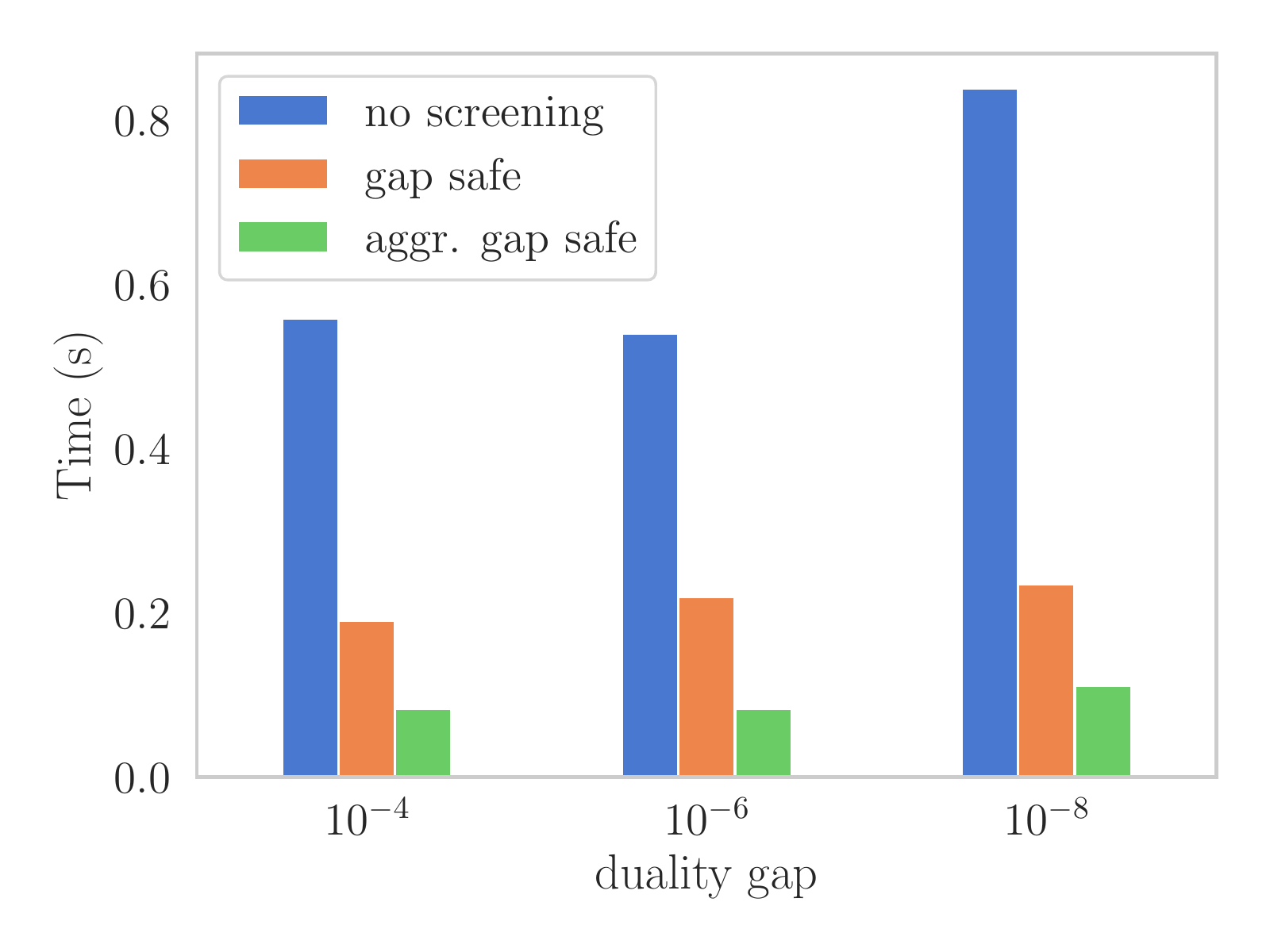}}
  %
  %
  \subfigure[Times for $\lambda=\lambda_{\max}/100$.]{\includegraphics[width=0.48\linewidth, keepaspectratio]{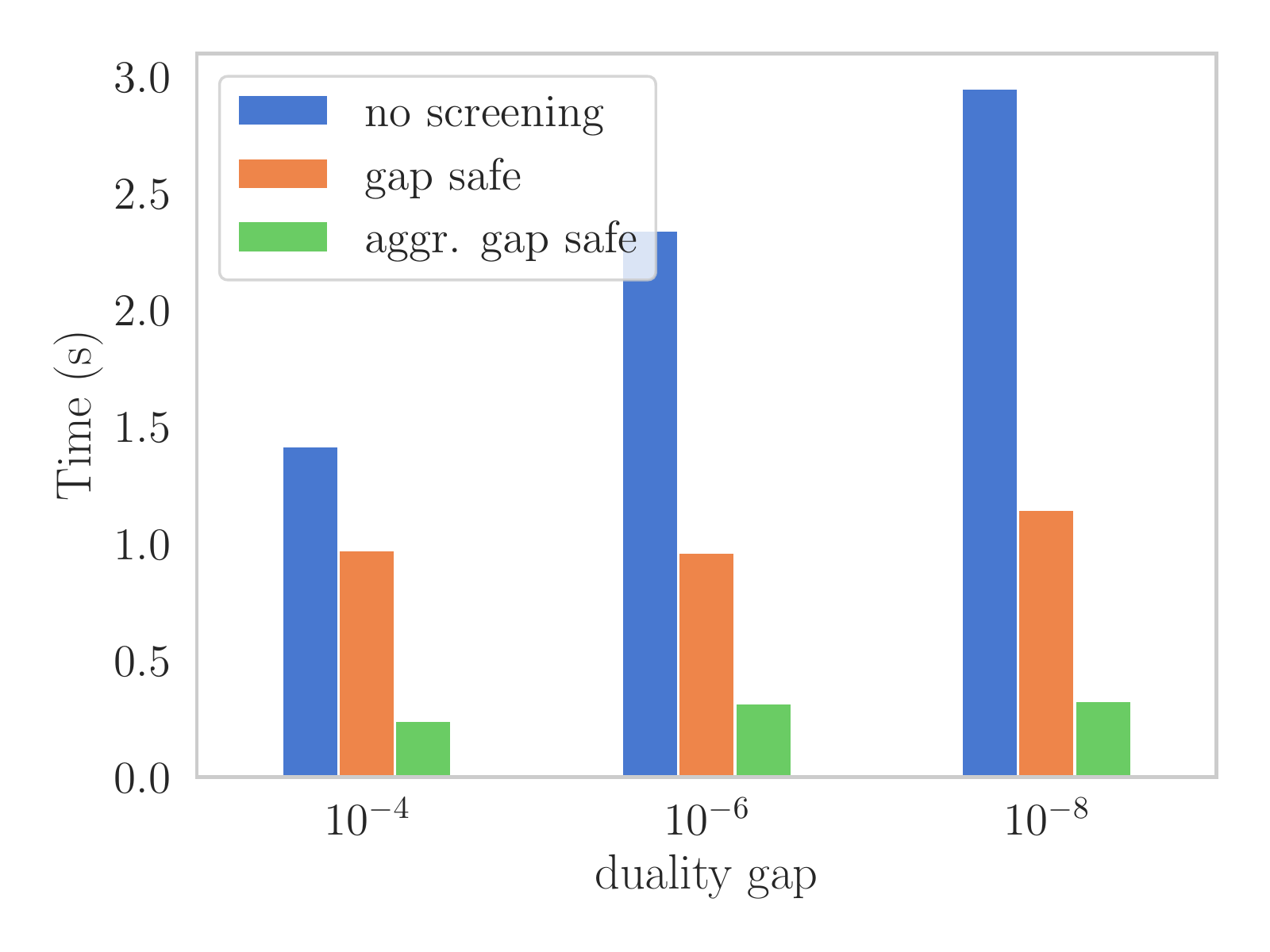}}
  \caption{Illustrations on Lasso using (cyclic) coordinate descent on Leukemia dataset ($n=72$ observations and $p=7129$ features. Here $\lambda_{\max} = \normin{X^\top y}_{\infty}$ is the smallest $\lambda$ such that $\hat\beta=0$ is a primal optimal solution. \label{fig:rate_aggressive}}
\end{figure}

\paragraph{Working Sets.}
Following the suggestions made in \cite{Johnson_Guestrin15, Massias_Vaiter_Gramfort_Salmon19}, one can consider, for any group $g$ in $\mathcal{G}$
\begin{align}
d_g(\theta) = \frac{\sfunc{\partial \Omega_g(\beta_{g}^{\star})}(d) - \sfunc{X_{g}^{\top} \theta}(d)}{\sup_{\norm{u}=1}\sfunc{X_{g}^{\top}u}(d)} \enspace,
\end{align}
as a measure of the importance of feature $X_g$. Thus, one can design a working set \ie a set of group $g$ in which to restrict the optimization problem, by selecting the groups that have a higher value $d_g(\theta)$.
These methods fit naturally in our framework.

\section{Numerical Experiments}

We consider simple examples to illustrate the performance of different acceleration strategies with screening rules on Lasso problem with real datasets.
We use a cyclic coordinate descent solver \footnote{The implementation is available at \url{https://github.com/EugeneNdiaye/Gap_Safe_Rules}} as a shared standard algorithm for all methods.
All methods are stopped when the duality gap reaches a prescribed tolerance $\epsilon \norm{y}^2$ where $\epsilon$ is set to $10^{-4}$, $10^{-6}$ or $10^{-8}$.
For readability, the execution times of the algorithms are normalized with respect to the running time of coordinate descent with the gap safe screening rule baseline as done in \cite{Ndiaye_Fercoq_Gramfort_Salmon17}. Evaluations of the performance of safes rules for other problems such as logistic regression, Sparse-Group Lasso, SVM etc are available in the literature \eg \cite{Ndiaye_Fercoq_Gramfort_Salmon15, Ndiaye_Fercoq_Gramfort_Salmon16, Shibagaki_Karasuyama_Hatano_Takeuchi16}.

Although safe rules can save a significant amount of computational time, they should be conservative so as not to wrong eliminate relevant variables.
In our numerical experiments, we observe that this constraint can limit their efficiency. By reducing this safety constraint, one can greatly improve their efficiency by combining them with a simple heuristic like the one introduced in \Cref{sec:Acceleration_Strategies}.

\begin{figure}[H]
\includegraphics[width=\linewidth]{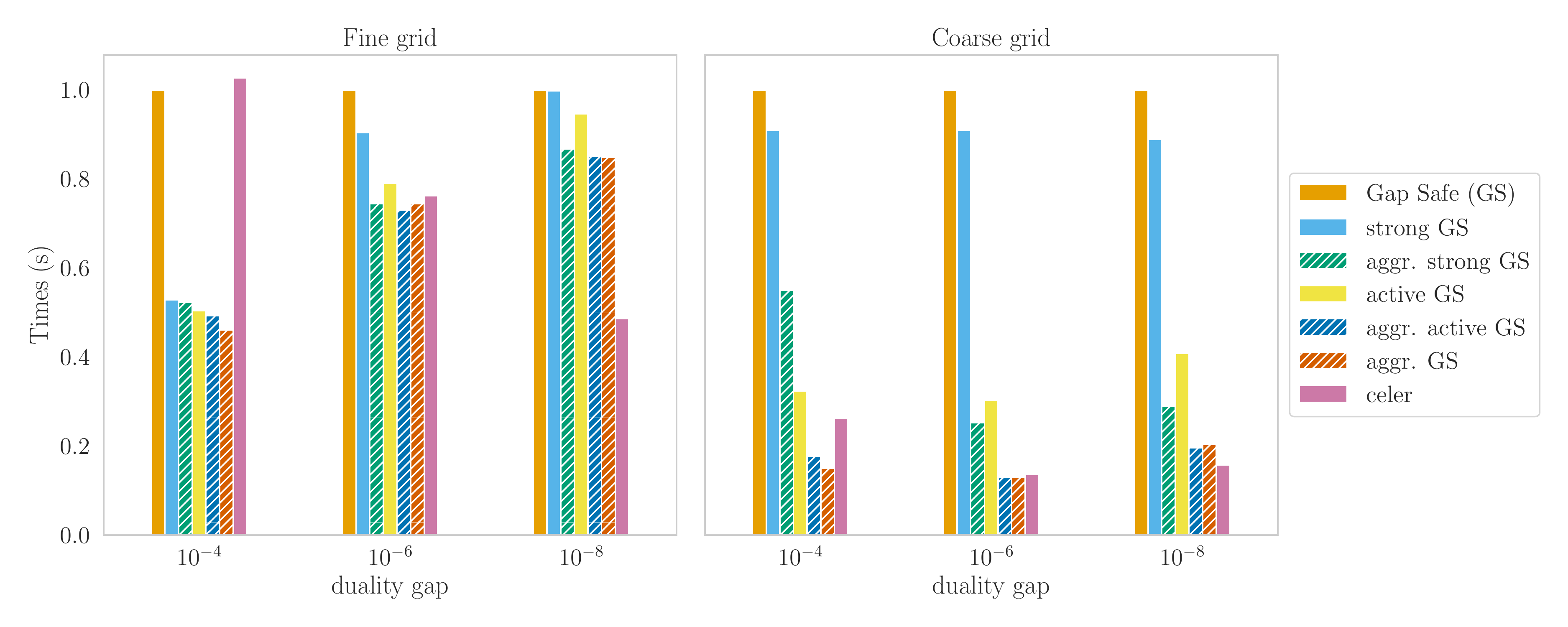}\label{subfig:lasso_leukemia}
\caption{Lasso on the \texttt{Leukemia} (dense data with $n=72$ observations and $p=7129$ features). Computation times needed to solve the Lasso regression path to desired accuracy for a grid of $\lambda$ from $\lambda_{\max} = \norm{X^\top y}_{\infty}$ to $\lambda_{\max} / 100$. The size of the dense grid (resp. sparse grid) is $100$ (resp. 10).
}\label{fig:lasso_leukemia}
\end{figure}

\begin{figure}[H]
\includegraphics[width=\linewidth]{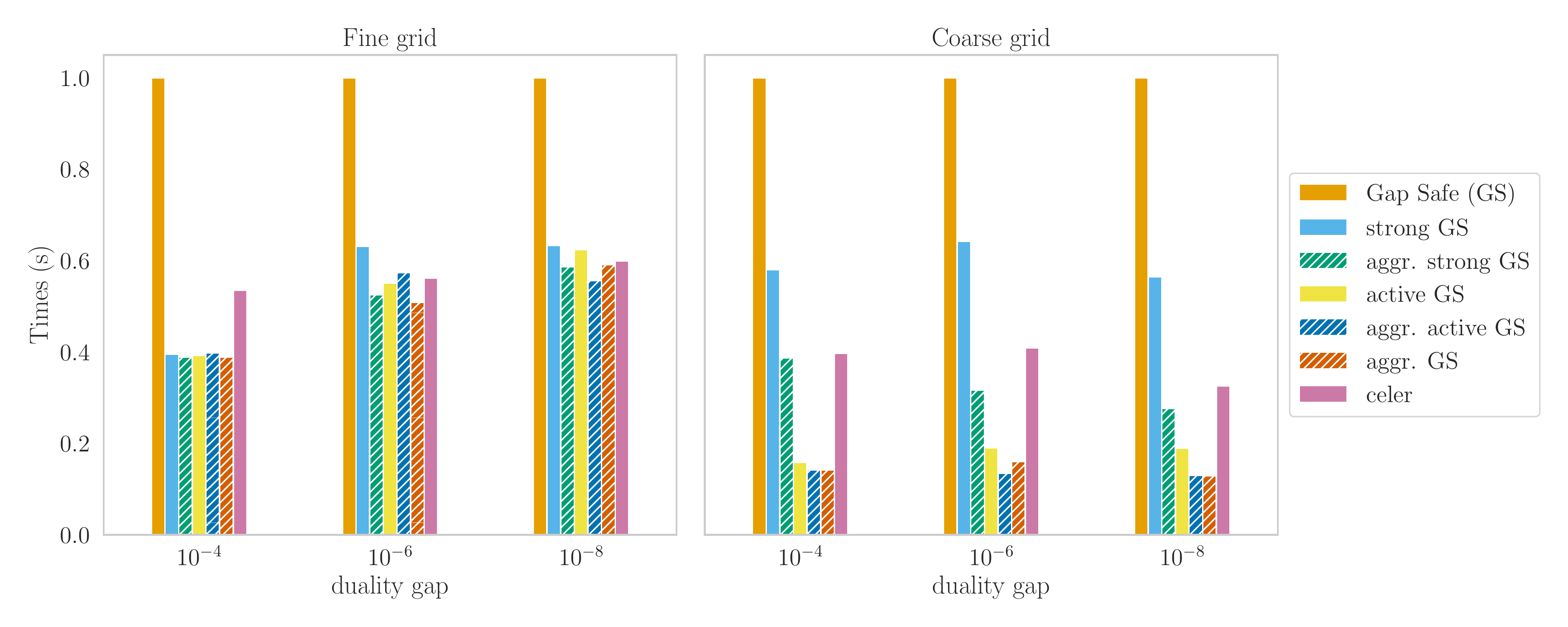}\label{subfig:lasso_climate}
\caption{Lasso on the \texttt{climate NCEP/NCAR Reanalysis 1} (dense data with $n=814$ observations and $p=73570$ features) see \cite{Kalnay_Kanamitsu_Kistler_Collins_Deaven_Gandin_Iredell_Saha_White_Woollen_Others96}. Computation times needed to solve the Lasso regression path to desired accuracy for a grid of $\lambda$ from $\lambda_{\max} = \norm{X^\top y}_{\infty}$ to $\lambda_{\max} / 100$. The size of the dense grid (resp. sparse grid) is $100$ (resp. 10).
}\label{fig:lasso_climate}
\end{figure}

\begin{figure}[H]
\includegraphics[width=\linewidth]{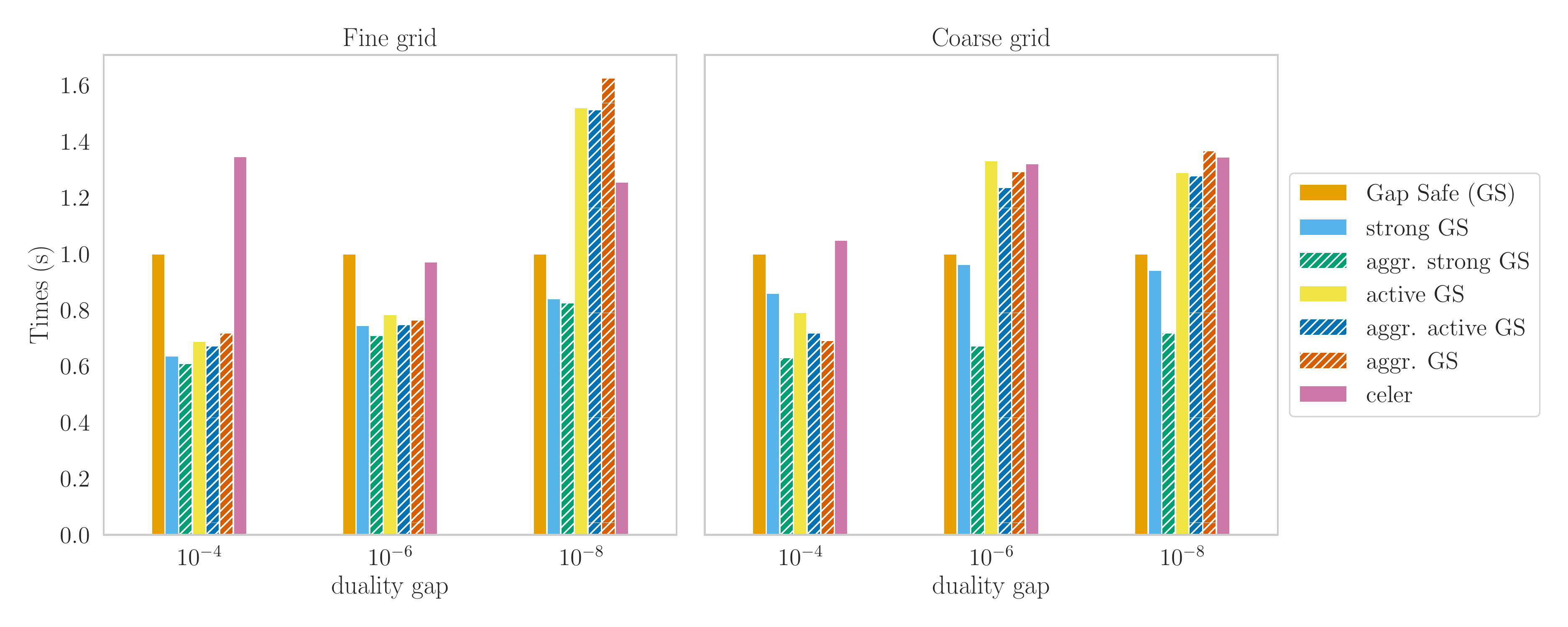}\label{subfig:lasso_rcv1_train}
\caption{Lasso on the \texttt{rcv1\_train} (sparse data with $n=20242$ observations and $p=19960$ features) available in libsvm. Computation times needed to solve the Lasso regression path to desired accuracy for a grid of $\lambda$ from $\lambda_{\max} = \norm{X^\top y}_{\infty}$ to $\lambda_{\max} / 100$. The size of the dense grid (resp. sparse grid) is $100$ (resp. 10).
}\label{fig:lasso_rcv1_train}
\end{figure}

\begin{figure}[H]
\includegraphics[width=\linewidth]{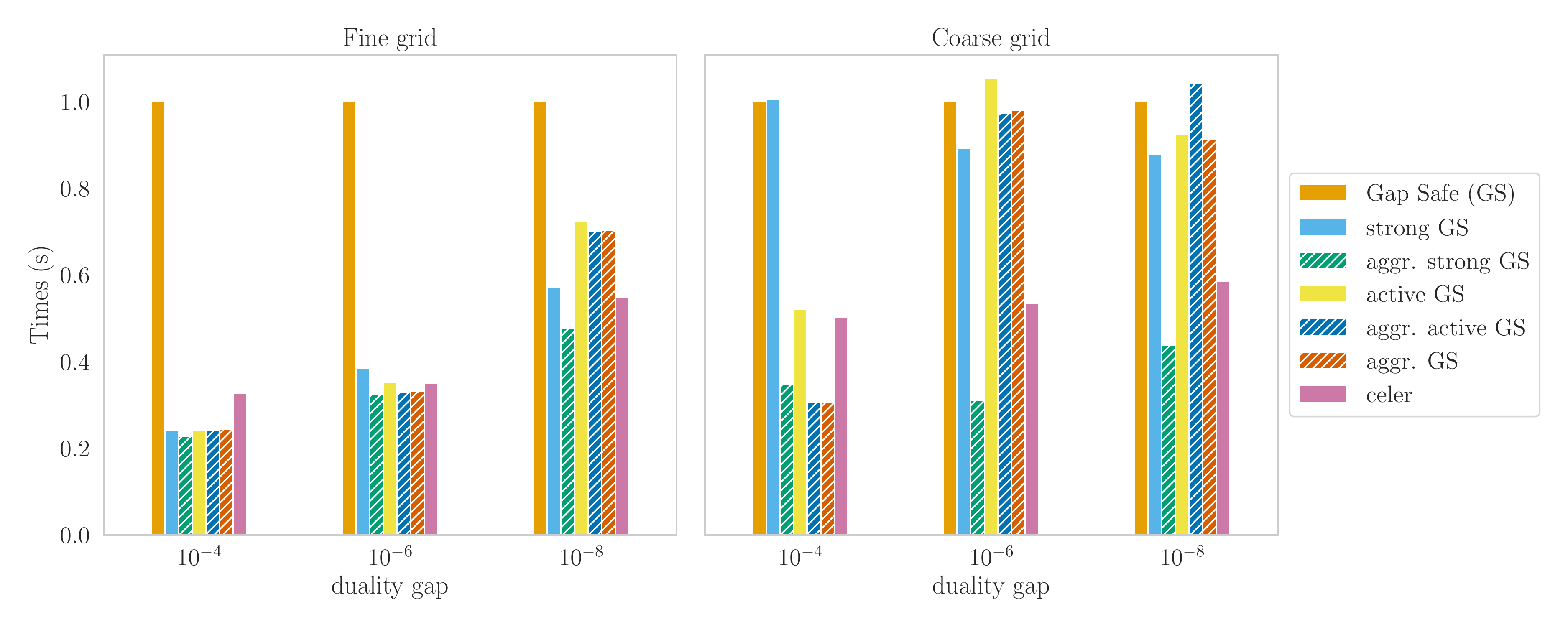}\label{subfig:lasso_news20}
\caption{Lasso on the \texttt{news20} (sparse data with $n=19996$ observations and $p=632983$ features) available in libsvm. Computation times needed to solve the Lasso regression path to desired accuracy for a grid of $\lambda$ from $\lambda_{\max} = \norm{X^\top y}_{\infty}$ to $\lambda_{\max} / 100$. The size of the dense grid (resp. sparse grid) is $100$ (resp. 10).
}\label{fig:lasso_news20}
\end{figure}

\section{Conclusion}

We have presented a simple way to unify various contributions that explicitly identify active variables, especially in sparse regression problems. For this, we have relied on optimality conditions and the fact that the subdifferentials of a function evaluated at two distinct points can not be overlapped. It should be noted that this remarkable property is not limited to convex functions (\eg it holds for non-convex setting as soon as the set \eqref{eq:definition_subdifferential} is non empty).

Extending the identification rules to subdifferential in the sense of Fr\'echet or Clarke would be a natural venue for future works. Promising results have been shown in \cite{Lee_Breheny15, Rakotomamonjy_Gasso_Salmon19}. However, it is still open to get a unified framework for non convex optimization problems and non separable regularization function.

When an optimization algorithms can benefit from screening rules, we have also shown that the number of iterations to identify the active set can be accurately estimated, and depends only on the rate of convergence of the (converging) algorithm used.
Numerical experiments of some heuristic acceleration rules have been provided, showing their interest for (block) coordinate descent algorithms.

\bibliography{jcvx}
\bibliographystyle{plain}

\end{document}